\documentclass[10pt]{article}

\PassOptionsToPackage{dvipsnames}{xcolor}

\usepackage[margin=1in]{geometry}
\usepackage[numbers,sort&compress]{natbib}

\usepackage{iftex}
\ifPDFTeX
  \usepackage[utf8]{inputenc} %
  \usepackage[T1]{fontenc}    %
  \usepackage{CJKutf8}
\fi
\ifLuaTeX
  \usepackage{luatexja}
  \usepackage{luatexja-fontspec}
  \setmainjfont{Noto Sans Mono CJK JP}
\fi
\ifXeTeX
  \usepackage{fontspec}
  \usepackage{xeCJK}
\fi
\usepackage{hyperref}       %
\usepackage{url}            %
\usepackage{booktabs}       %
\usepackage{amsfonts}       %
\usepackage{nicefrac}       %
\usepackage{microtype}      %
\usepackage[dvipsnames]{xcolor}
\usepackage[normalem]{ulem}

\makeatletter
\@tfor\@L:=ABCDEFGHIJKLMNOPQRSTUVWXYZ\do{%
  \edef\@tmp{\noexpand\newcommand\expandafter\noexpand\csname b\@L\endcsname{\noexpand\mathbb{\@L}}}%
  \@tmp
}
\@tfor\@L:=ABCDEFGHIJKLMNOPQRSTUVWXYZ\do{%
  \edef\@tmp{\noexpand\newcommand\expandafter\noexpand\csname c\@L\endcsname{\noexpand\mathcal{\@L}}}%
  \@tmp
}
\makeatother

\newcommand{\bx}{\mathbf{x}}

\usepackage{amsmath}
\usepackage{physics}
\usepackage{amsthm}
\newtheorem{definition}{Definition}
\newtheorem{proposition}{Proposition}

\usepackage{thmtools}
\usepackage{amssymb}
\usepackage{graphicx}
\makeatletter
\newcommand{\HeatmapInclude}[2][]{%
  \IfFileExists{#2}{%
    \includegraphics[#1]{#2}%
  }{%
    \begingroup
    \setlength{\fboxsep}{6pt}%
    \fcolorbox{gray!40}{gray!15}{%
      \parbox[c][2.6cm][c]{0.96\linewidth}{%
        \centering\footnotesize\sffamily
        \textcolor{gray!60}{\textbf{[missing figure]}}\\[0.35em]
        \texttt{\detokenize{#2}}%
      }%
    }%
    \endgroup
  }%
}
\makeatother
\usepackage{wrapfig}
\usepackage{subcaption}
\usepackage{bbm}
\usepackage{multirow}
\usepackage{tabularx}
\usepackage[nameinlink,capitalize]{cleveref}
\crefname{section}{Section}{Sections}
\Crefname{section}{Section}{Sections}
\crefname{subsection}{Section}{Sections}
\Crefname{subsection}{Section}{Sections}
\crefname{appendix}{Section}{Sections}
\Crefname{appendix}{Section}{Sections}
\crefname{equation}{Equation}{Equations}
\Crefname{equation}{Equation}{Equations}
\crefname{figure}{Figure}{Figures}
\Crefname{figure}{Figure}{Figures}
\crefname{table}{Table}{Tables}
\Crefname{table}{Table}{Tables}
\crefname{subfigure}{Figure}{Figures}
\Crefname{subfigure}{Figure}{Figures}
\crefname{subtable}{Table}{Tables}
\Crefname{subtable}{Table}{Tables}
\crefname{proposition}{Proposition}{Propositions}
\Crefname{proposition}{Proposition}{Propositions}
\crefname{corollary}{Corollary}{Corollaries}
\Crefname{corollary}{Corollary}{Corollaries}
\crefname{lemma}{Lemma}{Lemmas}
\Crefname{lemma}{Lemma}{Lemmas}
\crefname{definition}{Definition}{Definitions}
\Crefname{definition}{Definition}{Definitions}
\title{Learning Large-Scale Modular Addition with an Auxiliary Modulus}
\date{}

\author{%
  Hanato Kikuchi$^{1}$, Ryosuke Masuya$^{1}$, Kazuhiko Kawamoto$^{1}$,\\[0.5em]
  Hiroshi Kera$^{1,2}$\thanks{Corresponding author: Hiroshi Kera (\texttt{kawa@chiba-u.jp}).}\\[0.55em]
  $^{1}$Chiba University \qquad $^{2}$National Institute of Informatics
}

\begin{document}
\ifPDFTeX
\begin{CJK*}{UTF8}{min}
\fi

\maketitle

\begin{abstract}
  Learning parity functions, more general modular addition, is a challenging machine learning task due to its input sensitivity.
  A recent study substantially scaled modular addition learning in both the number of summands and the modulus. Its key idea is to increase zeros in training sequences, reducing the effective number of summands and thus controlling training difficulty; however, this induces covariate shift between training and test input distributions.
  This study theoretically and empirically analyzes this side effect and proposes a covariate-shift-free method for modular addition.
  Specifically, we introduce an auxiliary modulus $Kq$ during training, which reduces wrap-around frequency and problem difficulty while preserving the same input distribution across training and testing.
  Experiments show strong scalability and sample efficiency: even for large input length $N$, large modulus $q$, and small datasets---where the sparse method fails to learn---our method achieves equal or better match accuracy and relaxed $\tau$-accuracy. For example, at $N=64$ and $q=974269$, our method trained on 100K samples achieves $97.0\%$ $\tau$-accuracy at $\tau=0.05$, while the sparse method achieves only $9.5\%$ with the same data size and $93.9\%$ even when extended to 1M samples.
\end{abstract}

\section{Introduction} \label{ch:introduction}

Despite recent success in learning complex symbolic computational tasks, such as symbolic integration~\citep{charton2021learning} and Gr\"obner basis computation~\citep{DBLP:conf/nips/KeraIKVY24}, and so on~\citep{kera2026computational,alfarano2024global}, learning modular addition is still a challenging task for deep learning models.
\begin{definition}[Modular Addition]\label{def:modular-addition}
    Let $N, q \in \bN$ and $q \geq 2$. Given a sequence $\mathbf{x}=[x_1,\ldots,x_N]$ with $x_i\in\{0,\ldots,q-1\}$, compute the sum $\sum_{i=1}^N x_i$ modulo $q$, i.e., $f_{q}(\mathbf{x}) := \sum_{i=1}^N x_i \mod q$.
\end{definition}
The difficulty of learning $f_q$ arises from the input sensitivity: even a single change in the input can significantly affect the output. Even a single change in the input can significantly affect the output. Theoretical analysis by \citet{pmlr-v70-shalev-shwartz17a} and \citet{hahn2024sensitivee} has proved this for the parity problem (i.e., modular addition with $q=2$). Larger $N, q$ increases the difficulty further as it incurs more wraps around the modulus. The number of required training samples grows with $N$ and $q$ \citep{conf/icml/MohamadiLWS24}; besides, large $N$ necessitates an increased network width \citep{DBLP:journals/corr/grokkingmojularpoly}.

A recent study by \citet{conf/icml/SaxenaAWL25} identified learning modular addition as a key component of attacking the Learning With Errors (LWE) problem, a basis for post-quantum cryptography \citep{NIST2024}, and proposed a learning method that substantially scaled modular addition learning in both the number of summands and the modulus. Particularly, addressing large modulus $q$ is crucial for their application. A naive training even leads to stagnated training loss because of the small variance of the gradient of the loss function~\cite{pmlr-v70-shalev-shwartz17a}. Thus, the key is to limit wrap-around frequency at training time and ease the problem difficulty to make training flow. \citet{conf/icml/SaxenaAWL25} implemented this idea by increasing zeros in training sequences, thereby lowering the effective number of summands; however, this induces covariate shift between training and test input distributions, which is unfavorable from a standard machine learning perspective.

In this study, we propose a new learning method for modular addition that involves no covariate shift and attains scalable and sample-efficient learning.
The proposed method introduces an auxiliary loss of modular addition with enlarged modulus $Kq$ for $K>1$. The enlarged modulus reduces wrap-around frequency and problem difficulty at training time, while preserving the input distribution for both training and testing.
Our experiments demonstrate that elimination of covariate shift substantially improves the sample efficiency and scalability of modular addition learning: the proposed method achieves competitive or superior accuracy compared to the sparse method by \citet{conf/icml/SaxenaAWL25} under various conditions of $N, q$, and even with a smaller training sample size.

We summarize the contributions as follows.
\begin{itemize}
    \item  We examine the sparse method by \citet{conf/icml/SaxenaAWL25} and theoretically and empirically demonstrate the side effect of covariate shift. Specifically, training on sparse inputs with more zeros than dense inputs in the test distribution induces non-trivial generalization error. Besides, the success of the sparse method is sensitive to the model configuration, such as Dropout, PreNorm, Bias, and weight initialization.
    \item  We propose a covariate-shift-free method for modular addition learning. An auxiliary loss of modular addition with enlarged modulus $Kq$ for $K>1$ is introduced. This loss replaces the main loss with probability of $r$. The enlarged modulus reduces wrap-around frequency and problem difficulty at training time, while preserving the input distribution for both training and testing.
    \item Our method scales to larger problem sizes beyond the reach of the prior method. Concretely, at $N=64$ and $q=974269$, training on only 100K samples yields $97.0\%$ $\tau$-accuracy (with relative error tolerance $\tau=0.05$), whereas the sparse method attains $9.5\%$ at the same scale and only $93.9\%$ even with 1M samples.
\end{itemize}

\section{Related Work} \label{ch:relatedwork}
We summarize related work on learning modular arithmetic using deep learning models, methods for utilizing easier examples, and auxiliary tasks.

\paragraph{Learning modular addition with deep learning models.}
Recent studies have investigated how deep learning models learn modular addition.
These works show that models learn this task by acquiring internal periodic structures.
For instance, this periodicity is confirmed in algorithm models learned for modular addition. Specifically, it has been shown that models treat numbers as points in Fourier space and learn modular addition through algorithms such as addition of angles \citep{DBLP:conf/iclr/NandaCLSS23}, the "Pizza" algorithm \citep{DBLP:conf/nips/ZhongLTA23}, and the approximate Chinese Remainder Theorem \citep{DBLP:journals/corr/abs-2505-18266}.
Additionally, the weights and embeddings of models trained on modular addition also exhibit periodic structures \citep{Power2022grokking,Liu2022groking}, and periodic analytical solutions for these weights have been discovered \citep{DBLP:journals/corr/grokkingmodulerarithmetic}. Furthermore, these periodic structures are observed in other modular arithmetic tasks, such as modular multiplication \citep{DBLP:journals/corr/grokkingmojularpoly,furuta2024towards}.
Therefore, helping models learn periodic structures is important. Previous methods, such as angle embeddings \citep{stevens2025salsa}, directly add periodicity by treating numbers as points on a circle. Instead of changing the inputs, our method uses an auxiliary task with a different period (modulus $Kq$) to help the model learn periodicity.

\paragraph{Learning from simpler tasks.}
Utilizing simpler tasks is effective for learning modular addition and parity problems. For instance, one approach uses Chain of Thought \citep{DBLP:conf/nips/Wei0SBIXCLZ22} to decompose problems into steps, treating them as simpler sub-problems. This enables models to solve parity problems that are otherwise difficult to learn \citep{DBLP:conf/iclr/WiesLS23,DBLP:conf/iclr/KimS25}. Another effective approach is curriculum learning \citep{DBLP:conf/icml/BengioLCW09}, which gradually increases task difficulty. For parity problems, sparse parity problems \citep{DBLP:conf/nips/DanielyM20} are utilized as simpler tasks, and applying curriculum learning with them improves both learning accuracy and efficiency on the target problem \citep{DBLP:conf/nips/AbbeCL23}. Similarly, for modular addition, recent work mixes sparse and standard tasks during training, successfully learning more complex modular addition (e.g., up to a terms of $N = 128$ modulus of $p=974269$) than conventional methods \citep{conf/icml/SaxenaAWL25}.
However, the method in \citet{conf/icml/SaxenaAWL25} causes covariate shift because mixing easier problems only during training alters the input distribution (as detailed in \cref{ch:background}). In contrast, our method avoids this by using an auxiliary task to learn both problems simultaneously, keeping the input distribution consistent.

\paragraph{Learning with auxiliary tasks.}Auxiliary task learning improves generalization \citep{caruana1997multitask, ruder2017overview}. In mathematical tasks, learning different operations simultaneously improves accuracy. For example, joint training on multiple arithmetic operations enhances performance \citep{lee2024teaching}. Furthermore, combining simple and difficult tasks reduces model size if they share core building blocks \citep{both2025small}. While \citet{both2025small} simultaneously train on different operations (e.g., modular addition and modular multiplication with a non-prime modulus), we use a highly related task: the same modular addition with a different modulus.

\section{Problem Setting} \label{ch:background}

This study tackles modular addition learning (cf.~\cref{def:modular-addition}). Concretely, we adopt the setup below.

\paragraph{Training on sparse inputs.} A very recent state-of-the-art method for learning modular addition---which we refer to as the sparse method— is to train on inputs that are sparser than samples from the uniform distribution on \(\{0,\dots,q-1\}^N\)\citep{conf/icml/SaxenaAWL25}. Concretely, the procedure first samples the number of non-zero entries \(z\in\{1,\dots,N\}\) from \(g(z)\propto 1/\sqrt{N-z+1}\), then chooses \(z\) positions uniformly at random and fills them independently with values from \(\{0,\dots,q-1\}\), leaving the remaining \(N-z\) positions as zero. While \(g\) itself peaks at \(z=N\) (a fully populated input), it has a much heavier tail toward small \(z\) than the uniform marginal \(\mathrm{Bin}(N,(q-1)/q)\); hence the resulting samples are typically sparser than uniform ones. Involving more zeros implies fewer wraps around the modulus, thereby reducing the difficulty of the problem. 

While the sparse method is empirically effective, it seems unnatural because it introduces a covariate shift between the training distribution (many zeros in $\bx$) and the test distribution (uniformly sampled from $\{0,\dots,q-1\}^N$). To examine this, we quantify the resulting generalization gap.

\begin{proposition}[Generalization gap under sparse training, informal]\label{prop:gap-sparse}
    Let \(P_{\mathrm{te}}\) be the uniform distribution on \(\mathcal{X}=\{0,\dots,q-1\}^N\) and 
    let \(P_{\mathrm{tr}}\) be the sparse construction of \mbox{\citep{conf/icml/SaxenaAWL25}}. 
    Write \(n_0(x):=\#\{i:x_i=0\}\) for the number of zero entries, and call an input 
    \emph{zero-free} if \(n_0(x)=0\) (all entries are non-zero). 
    Suppose the model attains training risk \(R_{\mathrm{tr}}(f)\le\delta\) but its expected loss 
    on a uniformly random zero-free input is at least \(\varepsilon\). Then
    \begin{align}
        R_{\mathrm{te}}(f)-R_{\mathrm{tr}}(f)
        \;\ge\;
        \varepsilon\!\left(1-\tfrac{1}{q}\right)^{\!N}-\delta.
        \label{eq:gap-tv}
    \end{align}
\end{proposition}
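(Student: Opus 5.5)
The plan is to lower-bound the test risk by the contribution of zero-free inputs, and then subtract the training risk. We assume throughout that the loss is nonnegative, so every term below can be discarded or lower-bounded freely. Write \(\ell_f(x)\ge 0\) for the expected loss of \(f\) at input \(x\) (averaging over the label, which is deterministic here). Then
\begin{align*}
R_{\mathrm{te}}(f)=\mathbb{E}_{x\sim P_{\mathrm{te}}}[\ell_f(x)],\qquad R_{\mathrm{tr}}(f)=\mathbb{E}_{x\sim P_{\mathrm{tr}}}[\ell_f(x)].
\end{align*}
Let \(Z:=\{x\in\mathcal{X}: n_0(x)=0\}\) denote the set of zero-free inputs.

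\textbf{Step 1 (mass of \(Z\) under the test distribution).} Under \(P_{\mathrm{te}}\) the coordinates are i.i.d.\ uniform on \(\{0,\dots,q-1\}\). Each coordinate is nonzero with probability \(1-1/q\), so
\begin{align*}
P_{\mathrm{te}}(Z)=\left(1-\tfrac1q\right)^N.
\end{align*}
Moreover, \(P_{\mathrm{te}}\) conditioned on \(Z\) is exactly the uniform distribution on zero-free inputs. This conditioning is what makes the hypothesis apply directly.

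\textbf{Step 2 (restrict the test risk to \(Z\)).} By nonnegativity of \(\ell_f\),
\begin{align*}
R_{\mathrm{te}}(f)\ \ge\ \mathbb{E}_{P_{\mathrm{te}}}[\ell_f(x)\mathbbm{1}\{x\in Z\}]
= P_{\mathrm{te}}(Z)\,\mathbb{E}[\ell_f(x)\mid x\sim\mathrm{Unif}(Z)]
\ \ge\ \varepsilon\left(1-\tfrac1q\right)^N.
\end{align*}
The last inequality uses the hypothesis that the expected loss on a uniformly random zero-free input is at least \(\varepsilon\).

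\textbf{Step 3 (subtract the training risk).} Using \(R_{\mathrm{tr}}(f)\le\delta\), we get
\begin{align*}
R_{\mathrm{te}}(f)-R_{\mathrm{tr}}(f)\ \ge\ \varepsilon\left(1-\tfrac1q\right)^N-\delta,
\end{align*}
which is the claim.

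No property of the sparse sampler \(P_{\mathrm{tr}}\) is needed for this inequality. The dependence on the sparse construction enters only in interpreting the result: \(P_{\mathrm{tr}}\) places relatively little mass on \(Z\) (only the \(z=N\) branch with all entries drawn nonzero). Consequently a small training risk \(\delta\) does not force small loss on \(Z\), and \(\varepsilon\) can remain large.

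The main obstacle is not the computation, which is immediate, but stating the informal hypotheses so that the argument is sound. In the formal version I would make three points explicit. First, the loss must be nonnegative, since Step 2 drops the contribution of \(\mathcal{X}\setminus Z\). Second, "expected loss on a uniformly random zero-free input" must mean the average over \(\mathrm{Unif}(Z)\), which coincides with \(P_{\mathrm{te}}(\cdot\mid Z)\) by Step 1. Third, as an optional complement, I would remark that \(P_{\mathrm{tr}}(Z)\) is small. Under the sparse construction, \(P_{\mathrm{tr}}(Z)=g(N)(1-1/q)^N\), so the total-variation distance satisfies
\begin{align*}
\mathrm{TV}(P_{\mathrm{tr}},P_{\mathrm{te}})\ \ge\ (1-g(N))\left(1-\tfrac1q\right)^N.
\end{align*}
This explains why the lower bound in \cref{eq:gap-tv} is non-vacuous.
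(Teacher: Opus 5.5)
Your proof is correct and is essentially the paper's argument: the paper also drops the nonnegative contributions from inputs with $n_0(x)\neq 0$, uses that $P_{\mathrm{te}}$ conditioned on zero-free inputs is uniform on them with $P_{\mathrm{te}}(n_0(x)=0)=(1-1/q)^N$, and then subtracts $R_{\mathrm{tr}}(f)\le\delta$. Your optional remark $P_{\mathrm{tr}}(Z)=g(N)(1-1/q)^N$ and the resulting total-variation bound are also correct, though the inequality itself does not need them.
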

\textit{Proof sketch.} Conditioning on \(n_0(x)\) gives 
\(R_{\mathrm{te}}(f)\ge P_{\mathrm{te}}(n_0(x)=0)\,\varepsilon=(1-1/q)^N\varepsilon\); 
subtracting \(R_{\mathrm{tr}}(f)\le\delta\) yields \cref{eq:gap-tv}. See 
\cref{sec:proof-gap-sparse} for the formal statement and full proof.

\begin{figure}[htbp]
    \centering
    \begin{subfigure}[b]{0.48\textwidth}
        \centering
        \includegraphics[width=\textwidth]{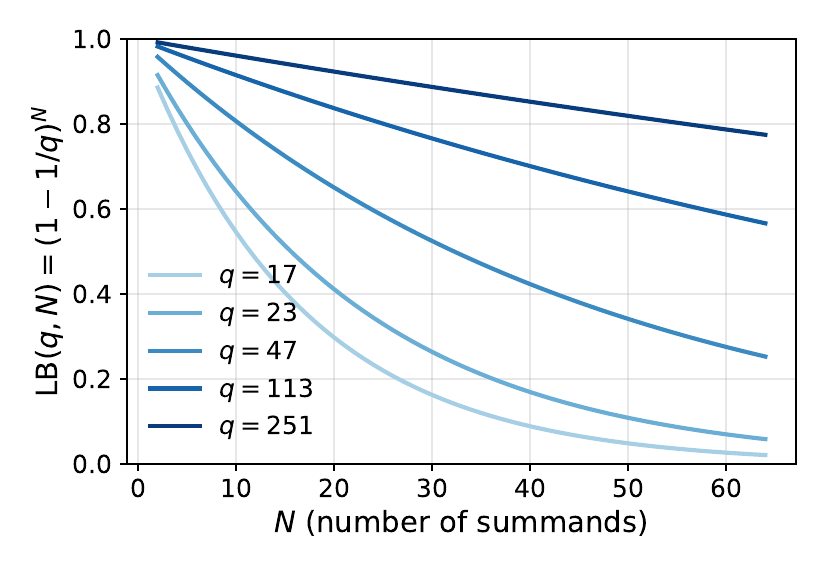}
        \caption{Generalization gap lower bound vs.\ $N$.}
        \label{fig:gap-lb-vs-N}
    \end{subfigure}
    \hfill
    \begin{subfigure}[b]{0.48\textwidth}
        \centering
        \includegraphics[width=\textwidth]{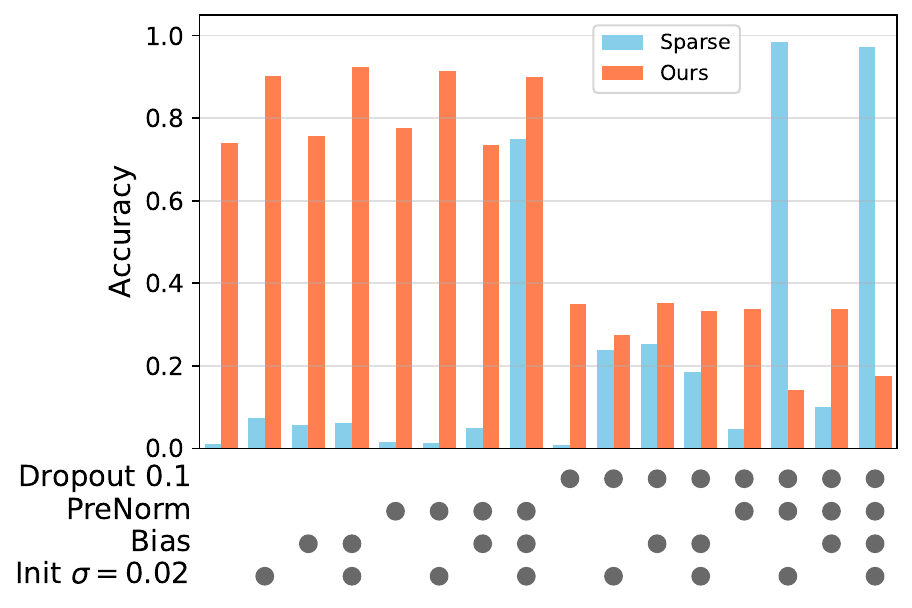}
        \caption{Match accuracy vs.\ model configuration.}
        \label{fig:param_sweep_comparison}
    \end{subfigure}
    \caption{%
      \textbf{(a)} Lower bound \(\mathrm{LB}(q,N)=(1-1/q)^N\) of \cref{eq:gap-tv} as a function of \(N\) for several moduli \(q\). Larger \(q\) (darker blue) keeps the bound substantial across the whole range, while for small \(q\) the bound vanishes as \(N\) grows. See \cref{sec:proof-gap-sparse} for the underlying table.
      \textbf{(b)} Comparison of match accuracy under various model configurations ($N=8$, $q=97$). The lower panel indicates the active components for each configuration; the upper panel compares the sparse method (blue) and ours (orange).%
    }
    \label{fig:main_figure_label}
\end{figure}

The prefactor \((1-1/q)^N\) is large in typical modular-addition regimes 
(\(0.70\) for \((q,N)=(23,8)\), \(0.93\) for \((113,8)\), \(0.75\) for \((113,32)\); see 
\cref{fig:gap-lb-vs-N}) and tends to \(1\) as \(N/q\to 0\), in which case \cref{eq:gap-tv} 
reduces to \(R_{\mathrm{te}}(f)-R_{\mathrm{tr}}(f)\gtrsim\varepsilon-\delta\): any non-trivial 
error on the zero-free inputs that dominate the test distribution translates into an irreducible 
test-time gap.

\paragraph{Model configuration sensitivity.}
Beyond this theoretical bound, we observe empirically that the sparse method is 
sensitive to the model configuration (see \cref{fig:param_sweep_comparison}). 
Specifically, when varying the active components—such as Dropout, PreNorm, Bias, and weight initialization—the performance of the sparse method frequently collapses, often failing to learn entirely. In contrast, the proposed method is highly robust to such structural changes. Further details and comprehensive evaluations are provided in \cref{app:model_stability}. Although we do not have direct evidence that the covariate shift causes this, it is reasonable to consider that learning under covariate shift requires a careful training setup.

Note that the discussion above should not be read as a criticism of the sparse-input work of 
\citep{conf/icml/SaxenaAWL25}. 
Modular addition is hard enough that, without biasing training towards easier instances, 
even the training loss fails to decrease within practical compute 
budgets. Controlling problem difficulty is, therefore, necessary; they already 
demonstrated one concrete way to do this. 
What we pinpoint is only a side 
effect of \emph{how} this control is realized---inserting many zeros creates a covariate shift 
that bottlenecks the test risk.

Therefore, the desirable direction is to control the difficulty of training problems 
without introducing a covariate shift; we next present a method that achieves this.

\section{Method} \label{ch:method}

We now present our learning pipeline for the modular addition task. An important, well-known fact is that learning addition over $\mathbb{Z}$ is easy while learning addition over $\mathbb{Z}/q\mathbb{Z}$ is hard. That is to say, the difficulty of the problem derives from wrapping around the modulus.

\subsection{Auxiliary modulus $Kq$}
The central idea is to reduce the number of wraps around the modulus during training to facilitate learning. \citet{conf/icml/SaxenaAWL25} implemented this idea by sparsifying the input $\mathbf{x}$, but this leads to a covariate shift between the training and testing. Instead, we introduce an auxiliary modulus $Kq,\ (K > 1)$ to reduce the number of wraps around the modulus during training. Formally, with $r \in [0, 1]$, we define the target label $y$ for input $\mathbf{x}$ as follows:
\begin{align}\label{eq:target_label_aux}
    y = \begin{cases}
f_q(\mathbf{x})  & \text{with probability } 1-r \\
f_{Kq}(\mathbf{x}) & \text{with probability } r.
\end{cases}
\end{align}
Namely, we train the model to predict the solution to modular addition with modulus $q$ with probability $1-r$ and the solution to modular addition with modulus $Kq$ with probability $r$.

Importantly, the input remains restricted to $\{0,\dots,q-1\}$, i.e., $\mathbf{x} \in \{0,\dots,q-1\}^N$, for the auxiliary modulus $Kq$. The number $D_{Kq}(\bx)$ of wraps around the (extended) modulus for $\bx$ is consequently reduced to 
\begin{align}\label{eq:wrap_function}
    D_{Kq}(\bx) = \left\lfloor\frac{\sum_{i=1}^N x_i}{Kq}\right\rfloor,
\end{align}
where $\lfloor\, \cdot\, \rfloor$ denotes the floor function. 
The expected number of wraps around the modulus is then $\bE[(1-r)D_q + r D_{Kq}]$.

\subsection{Analysis of problem difficulty reduction}\label{subsec:analysis}
We examine the reduction of the problem difficulty by the proposed method and the sparse method. Here, the problem difficulty is measured by the expected number of wraps around the modulus. For simplicity of analysis, we ignore the floor function. 

We keep the notation as in \cref{ch:method}. The (continuous) number of wraps around the modulus is the expectations of the following random variables:
   \begin{align*}
        X_0 &= \frac{\sum_{i=1}^N x_i}{q}, &
        X_1 &= (1-r)\frac{\sum_{i=1}^N x_i}{q} + r\frac{\sum_{i=1}^N x_i}{Kq}, &
        X_2 &= \frac{\sum_{i=1}^z x_i}{q}, &
   \end{align*}  
   where $z$ is a random variable that represents the number of non-zero entries of an input data, and has a distribution $f_{\mathrm{inv\_sqrt}}(z) \propto 1/\sqrt{N - z + 1}$ introduced in \citet{conf/icml/SaxenaAWL25}, and $x_i$'s are i.i.d. random variables with a discrete distribution $U_q(x)=1/q$ if $x\in \{0, \ldots, q-1\}$ otherwise $0$. The random variable $X_0$ represents the number of wraps around the modulus, corresponding to the naive method without any training difficulty reduction. The random variables $X_1$ and $X_2$ correspond to the proposed method and the sparse method, respectively. 
   \begin{wrapfigure}{r}{0.4\textwidth}
      \centering
      \includegraphics[width=0.95\linewidth]{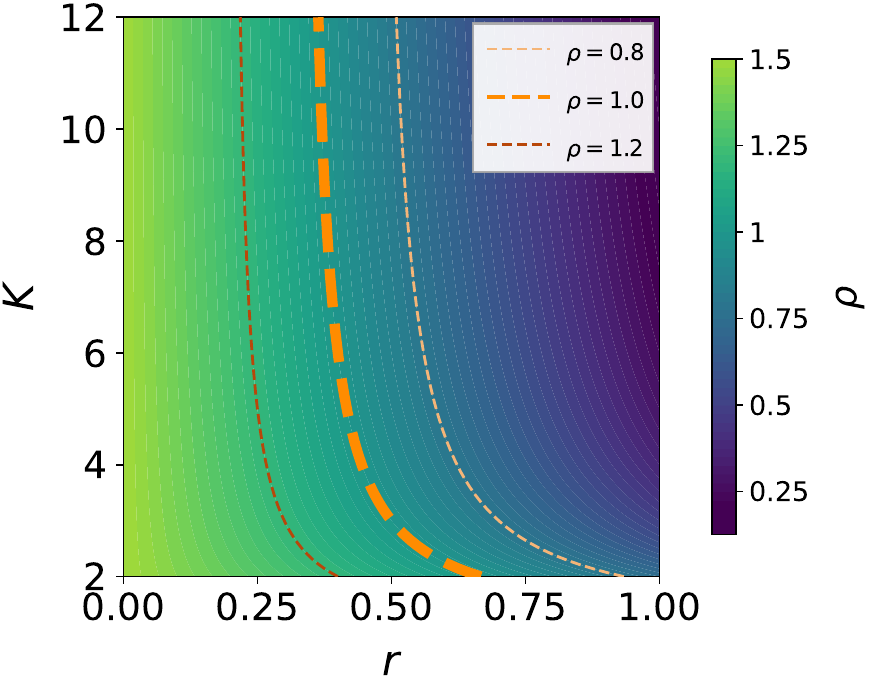}
      \captionsetup{skip=2pt}
      \caption{Heatmap of $\rho=\bigl((1-r)+r/K\bigr)/(2/3)$ in~\cref{eq:wrap_scale_factor_ratio}. The orange contour is defined by $\rho=1$. The range of $\rho$ is from $0.0$ to $1.5$.}
      \label{fig:wrap_scale_factor_heatmap}
      \vspace{-0.4\baselineskip}
   \end{wrapfigure}
   Then, we have the following proposition:
   \begin{proposition}\label{prop:expect_num_wraps}
    Keep the above notation. For $X_0$ and $X_1$, we have
    \begin{align*}
        \bE[X_0] &= \frac{N(q-1)}{2q}, \\
        \bE[X_1] &= \left((1-r) + r\frac{1}{K}\right)\frac{N(q-1)}{2q}
        =\left((1-r) + r\frac{1}{K}\right)\bE[X_0].
    \end{align*}
    For $X_2$, when $N$ is large enough, we have
    \begin{align*}
        \bE[X_2] &\approx \frac{2}{3}\frac{N(q-1)}{2q}=\frac{2}{3}\bE[X_0].
    \end{align*}
    \vspace{-0.3\baselineskip}
   \end{proposition}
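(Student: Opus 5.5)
The first two identities follow from linearity of expectation, and the third from conditioning on $z$ and then a Riemann-sum approximation of $\bE[z]$. The steps are as follows.

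\textbf{Step 1 ($X_0$).} Since each $x_i\sim U_q$, we have $\bE[x_i]=\frac{1}{q}\sum_{k=0}^{q-1}k=\frac{q-1}{2}$. Linearity then gives $\bE[X_0]=\frac{1}{q}\sum_{i=1}^N\bE[x_i]=\frac{N(q-1)}{2q}$.

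\textbf{Step 2 ($X_1$).} The variable $X_1$ is the deterministic linear combination $\bigl((1-r)+r/K\bigr)\frac{1}{q}\sum_i x_i$. Here $r$ and $K$ are fixed constants and the label randomness has already been averaged into the coefficients. Linearity together with Step 1 gives $\bE[X_1]=\bigl((1-r)+r/K\bigr)\bE[X_0]$ immediately.

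\textbf{Step 3 ($X_2$).} I treat $z$ as independent of the $x_i$. By the tower property, or Wald's identity,
\begin{align*}
\bE[X_2]=\bE\bigl[\bE[X_2\mid z]\bigr]=\bE\Bigl[\frac{z(q-1)}{2q}\Bigr]=\frac{\bE[z]}{N}\,\bE[X_0].
\end{align*}
The nonzero entries are drawn from $\{0,\dots,q-1\}$ as in the construction, so each still has mean $(q-1)/2$.

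It remains to show $\bE[z]\approx \frac{2}{3}N$. Substitute $m=N-z+1\in\{1,\dots,N\}$, so that
\begin{align*}
\bE[z]=\frac{\sum_{m=1}^N (N+1-m)\,m^{-1/2}}{\sum_{m=1}^N m^{-1/2}}.
\end{align*}
For large $N$, approximate the sums by integrals: $\sum_{m\le N} m^{-1/2}=2\sqrt N+O(1)$ and $\sum_{m\le N} m^{1/2}=\frac{2}{3}N^{3/2}+O(\sqrt N)$. The numerator is then $(N+1)(2\sqrt N)-\frac{2}{3}N^{3/2}+O(N)=\frac{4}{3}N^{3/2}+O(N)$, and hence $\bE[z]=\frac{2}{3}N+O(1)$.

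Equivalently, $z/N$ converges in distribution to the density $\propto (1-t)^{-1/2}$ on $[0,1]$, whose mean is $\frac{2}{3}$. Substituting back gives $\bE[X_2]\approx\frac{2}{3}\bE[X_0]$.

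\textbf{Main obstacle.} The only delicate point is the asymptotic step, namely controlling the error of the Euler--Maclaurin or integral comparison for $\sum m^{-1/2}$. Monotone comparison, $\int_1^{N+1}\le\sum\le 1+\int_1^N$, bounds both sums within $O(1)$ and $O(\sqrt N)$ respectively. That is enough to get relative error $O(1/N)$, which justifies the ``$\approx$''.
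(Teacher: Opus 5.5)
Your proof is correct and follows the same route as the paper. You use linearity of expectation for $X_0$ and $X_1$, then condition on $z$ to get $\bE[X_2]=\tfrac{q-1}{2q}\,\bE[z]$. You write $\bE[z]$ as the ratio of $\sum (N+1-m)m^{-1/2}$ to $\sum m^{-1/2}$ and replace both sums by the integrals of $x^{-1/2}$ and $x^{1/2}$. Your limiting-density remark is the same computation in probabilistic language.

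One thing to fix is your error bookkeeping. Dividing your numerator $\tfrac{4}{3}N^{3/2}+O(N)$ by $2\sqrt N+O(1)$ gives $\bE[z]=\tfrac{2}{3}N+O(\sqrt N)$, not $\tfrac{2}{3}N+O(1)$. This cannot be sharpened. Since $\sum_{m\le N}m^{-1/2}=2\sqrt N+\zeta(\tfrac{1}{2})+O(N^{-1/2})$ with $\zeta(\tfrac{1}{2})\approx-1.46$, one gets $\bE[z]=\tfrac{2}{3}N+\tfrac{\zeta(1/2)}{6}\sqrt N+O(1)$. For example, at $N=100$ you have $\bE[z]\approx 64.9$ versus $\tfrac{2}{3}N\approx 66.7$. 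So the relative error is of order $N^{-1/2}$, not $O(1/N)$, and your closing claim about the monotone comparison should be corrected accordingly.

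This still yields $\bE[X_2]/\bE[X_0]\to\tfrac{2}{3}$, which is all the ``$\approx$'' in the statement asserts, so the conclusion stands. The same caveat applies to the paper's intermediate expression $\tfrac{q-1}{2q}(\tfrac{2N}{3}+1)$: its ``$+1$'' is below the resolution of the Riemann-sum approximation.
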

   \Cref{prop:expect_num_wraps} shows that all expectations have the same factor $N(q-1)/2q\ (= \bE[X_0])$, so the dependence on $N$ and $q$ is identical at this coarse level up to constant multipliers. This indicates that the proposed method realizes a similar reduction of the problem difficulty as the sparse method, without introducing a covariate shift. \Cref{fig:wrap_scale_factor_heatmap} shows the ratio 
   \begin{align}\label{eq:wrap_scale_factor_ratio}
    \rho = \qty((1-r) + r\frac{1}{K})\ \bigr/ \bigl.\  \frac{2}{3}
   \end{align}
   of the constant factor between the proposed method and the sparse method over $K, r$.
   We observe $\rho \approx 1$ for the combination of $K$ and $r$ that perform well in the experiments. 
   
   A full proof of \cref{prop:expect_num_wraps} and additional analysis of the wrap-count variable are given in \cref{sec:detail_analysis}. While this section focuses on the random variable $\sum_{i=1}^Nx_i/(Kq)$ without the floor function, \cref{sec:detail_analysis} also examines properties of the original discrete random variable $D_{Kq}$.

\subsection{Computational pipeline}\label{sec:computational_pipeline}

We here assume the Transformer model for the explanation clarity, yet our method is not limited to this model. The input $\bx = [x_1, \ldots, x_N]$ is tokenized into a sequence of $N$ tokens $[\texttt{x$_{x_1}$}, \ldots, \texttt{x$_{x_N}$}]$, where \texttt{x$_{n}$} is the token of scholar $n$. The embedding layer maps each token to an embedding vector. Transformer encoder layers process them, and finally, the classification layer maps the final hidden state to logits. We refer to this standard formulation as token embedding.

\paragraph{Approach 1: extending vocabulary.} The standard token embedding is used. Introducing the auxiliary modulus $Kq$ needs two minor modifications to the standard training pipeline. First, the vocabulary set is extended from $\{0, \ldots, q-1\}$ to $\{0, \ldots, Kq-1\}$. Second, one of the two labels $f_q(\bx)$ and $f_{Kq}(\bx)$ with probability $r$ and $1-r$, respectively, is selected for the classification loss (i.e., the cross-entropy loss), as shown in \cref{eq:target_label_aux}. In inference, only the first $q$ logits are used to predict the solution to modular addition with modulus~$q$.

While this approach is simple, it can suffer from a large vocabulary size when $K$ is large. To address this, we extend the angular embedding approach \citep{stevens2025salsa} to handle the auxiliary modulus $Kq$.

\paragraph{Approach 2: dual angular embedding.} Angular embedding~\citep{stevens2025salsa} embeds a scalar $x \in \{0, \ldots, q-1\}$ to a two-dimensional unit circle $(\cos \phi, \sin \phi)$ with $\phi = 2\pi x_i / q$ and then lifts it to the embedding space by a linear layer. The linear read-out layer maps the final hidden state to a 2-dimensional vector. The MSE loss is used to measure the angular error, instead of the classification loss. In our case, we use two moduli $q$ and $Kq$, and thus scalars are embeded to four-dimensional unit circle $(\cos \phi, \sin \phi, \cos \phi', \sin \phi')$ with $\phi' = 2\pi x_i / Kq$. The read-out layer maps the final hidden state to a 4-dimensional vector. In conclusion, only the first two dimensions are used to predict the solution to modular addition with modulus $q$.

\section{Experiment} \label{ch:experiment}
In this section, we evaluate the proposed method on the modular addition task across various numbers of terms $N$ and moduli $q$. Particularly, we show that our method attains comparable or higher match accuracy than the sparse method \citep{conf/icml/SaxenaAWL25} using only one-tenth of the training samples (\cref{ex:ex1}) for most of the settings.

\subsection{Exact modular addition}\label{ex:ex1}

First, we evaluate the proposed method on the exact modular addition task, where models are trained to predict the exact solution to modular addition. Most experimental setups follow \citep{conf/icml/SaxenaAWL25} unless otherwise emphasized.

\paragraph{Dataset.} For the training dataset $\mathcal{D}_{\text{train}}$ parameterized by $N$, $q$, and $|\mathcal{D}_{\text{train}}|$, each entry of the input $\mathbf{x}$ was sampled from a uniform distribution for our method and from the sparse distribution for the sparse method~\citep{conf/icml/SaxenaAWL25}.
The test dataset $\mathcal{D}_{\text{test}}$ was fixed to 1M uniformly generated samples. For each input $\mathbf{x}$, the dataset provided the primary label $f_q(\mathbf{x})$ and the quotient $c = \lfloor \sum_{i=1}^N x_i / q \rfloor$. Since our method required the auxiliary target $f_{Kq}(\mathbf{x})$ during training, it was dynamically generated by first reconstructing the total sum as $c \cdot q + f_q(\mathbf{x})$ and then applying the modulo $Kq$ operation.

\paragraph{Training setup.}
An encoder-only Transformer architecture consisting of 4 layers and 4 attention heads, with an embedding dimension of 256 and a feed-forward network (FFN) dimension of 2048 was adopted. Either token embedding (cf. \textbf{Approach~1}) or angular embedding (cf. \textbf{Approach~2}) was used, as detailed in \cref{sec:computational_pipeline}. A custom loss to avoid embedding collapse was also used for the sparse method as \citet{conf/icml/SaxenaAWL25} suggested.
The dropout rate was set to 0.0, and positional embeddings were not used. For optimization, the AdamW optimizer \citep{loshchilov2018decoupled} was used with $(\beta_1, \beta_2) = (0.9, 0.999)$ and a weight decay of 0.1. All models were trained for 10 epochs with a batch size of 250. The learning rate was set to $3 \times 10^{-5}$ and scheduled with a linear decay, using a warmup ratio of 0.05 rather than fixed steps to accommodate varying training sample sizes. All experiments were conducted on a single GPU of NVIDIA RTX 6000 Ada. Training time per model ranged from approximately 1 minute to 20 hours, depending on $N$, $q$, and the training sample size.

\paragraph{Hyperparameter search.}
The optimal hyperparameter pair ($K \in \{4, \dots, 9\}$, $r \in \{0.1, \dots, 0.4\}$) was determined via a grid search on a small dataset ($|\mathcal{D}_{\text{train}}| = 100\text{K}$) and applied across all sample sizes. The specific values of $K$ and $r$ for each setting are detailed in \cref{sec:hyperparameters}. This search only required observing relative performance differences rather than full convergence. \Cref{ex:ex3} provides an ablation study demonstrating strong hyperparameter robustness.

\paragraph{Evaluation.}
Performance was measured by match accuracy on $\mathcal{D}_{\text{test}}$, defined as the proportion of predictions that perfectly matched the ground-truth labels. In the case of angular embedding, the model outputs a point $(\cos\hat{\phi}, \sin\hat{\phi})$, which was first converted back into an angle $\hat{\phi} \in [0, 2\pi)$. This angle was then mapped to a continuous value $\hat{s} = \frac{\hat{\phi}q}{2\pi} \in [0, q)$, and finally rounded to the nearest integer before evaluating the match.

\begin{table*}[tbp]
    \centering
    \setlength{\tabcolsep}{4.5pt}
    \caption{Comparison of match accuracy (\%) between our method and the baseline \citep{conf/icml/SaxenaAWL25} using $100\text{K}$ and $1\text{M}$ training samples for \textbf{(a)} token embedding and \textbf{(b)} angular embedding. Note that the baseline is adapted depending on the embedding type, as detailed in the main text. Our method achieves higher match accuracy across various numbers of terms $N$ and moduli $q$.}
    \label{tab:accuracy_comparison_by_sample_size}
    
    \begin{subtable}[t]{0.48\linewidth}
        \centering
        \caption{Token embedding}
        \label{tab:accuracy_comparison_by_sample_size_token}
        \begin{tabular}{cccccc}
            \toprule
             & & \multicolumn{2}{c}{$100\text{K}$} & \multicolumn{2}{c}{$1\text{M}$} \\
            \cmidrule(lr){3-4} \cmidrule(lr){5-6}
            $N$ & $q$ & Ours & Baseline & Ours & Baseline \\
            \midrule
            8 & 31 & \textbf{89.1} & 3.2 & \textbf{99.4} & 98.6 \\
             & 97 & \textbf{13.9} & 0.9 & \textbf{90.1} & 74.9 \\
             & 257 & \textbf{1.0} & 0.5 & \textbf{34.0} & 0.5 \\
            \midrule
            16 & 31 & \textbf{58.4} & 3.3 & \textbf{98.9} & 80.7 \\
             & 97 & \textbf{6.2} & 1.0 & \textbf{81.9} & 1.1 \\
             & 257 & \textbf{0.6} & 0.4 & \textbf{21.7} & 0.4 \\
            \midrule
            32 & 31 & \textbf{33.1} & 2.9 & \textbf{96.5} & 38.7 \\
             & 97 & \textbf{1.7} & 1.1 & \textbf{62.2} & 1.0 \\
             & 257 & \textbf{0.5} & 0.4 & \textbf{12.6} & 0.3 \\
            \bottomrule
        \end{tabular}
    \end{subtable}
    \hfill %
    \begin{subtable}[t]{0.48\linewidth}
        \centering
        \caption{Angular embedding}
        \label{tab:accuracy_comparison_by_sample_size_angle}
        \begin{tabular}{cccccc}
            \toprule
             & & \multicolumn{2}{c}{$100\text{K}$} & \multicolumn{2}{c}{$1\text{M}$} \\
            \cmidrule(lr){3-4} \cmidrule(lr){5-6}
            $N$ & $q$ & Ours & Baseline & Ours & Baseline \\
            \midrule
            16 & 97 & \textbf{98.9} & 32.2 & \textbf{99.9} & 92.6 \\
             & 257 & \textbf{93.0} & 11.8 & \textbf{99.4} & 63.8 \\
             & 433 & \textbf{80.7} & 6.8 & \textbf{98.5} & 45.7 \\
            \midrule
            32 & 97 & \textbf{95.6} & 4.2 & \textbf{99.6} & 81.1 \\
             & 257 & \textbf{73.5} & 0.6 & \textbf{88.5} & 44.9 \\
             & 433 & \textbf{57.5} & 0.9 & \textbf{92.2} & 27.7 \\
            \midrule
            64 & 97 & \textbf{83.2} & 0.9 & \textbf{97.4} & 58.7 \\
             & 257 & \textbf{43.2} & 0.3 & \textbf{81.1} & 23.4 \\
             & 433 & \textbf{29.0} & 0.2 & \textbf{53.8} & 11.4 \\
            \bottomrule
        \end{tabular}
    \end{subtable}
\end{table*}

\paragraph{Results: scalability and sample efficiency.}
As shown in \cref{tab:accuracy_comparison_by_sample_size}, our method consistently outperforms the baseline across all configurations. For token embeddings (\cref{tab:accuracy_comparison_by_sample_size_token}), the baseline completely fails to learn with 100K samples, whereas our method successfully initiates learning (e.g., $89.1\%$ at $N=8, q=31$). Our approach also exhibits superior scalability: when increasing $N$ from 8 to 32 (using 1M samples, $q=31$), the baseline's accuracy collapses from $98.6\%$ to $38.7\%$, while ours degrades minimally ($99.4\% \to 96.5\%$).
Under angular embeddings (\cref{tab:accuracy_comparison_by_sample_size_angle}), our method demonstrates even extreme sample efficiency, achieving higher accuracies with just 100K samples than the baseline does with 1M. For instance, at $N=16, q=257$, our model reaches $93.0\%$ (100K) compared to the baseline's $63.8\%$ (1M), with similar trends observed even at larger scales (e.g., $N=64, q=97$).

\subsection{Larger-scale modular addition}\label{ex:ex2}
We next address larger-scale modular addition tasks: $N \in \{16, 32, 64\}$ and $q \in \{257, 3329, 42899, 974269\}$. This was particularly targeted by the sparse method in the context of attack on LWE.  
In this case, the evaluation metric ($\tau$-accuracy with $\tau \in \{0.01, 0.05\}$) was used. 
\begin{equation*}
  \tau\text{-accuracy} = \frac{1}{|\mathcal{D}_\text{test}|} \sum_{x \in \mathcal{D_\text{test}}} \mathbbm{1}_{\| \hat{y} - y\| \leq \tau q}
\end{equation*}
where $\hat{y}$ is a continuous value converted from the model's output for the input $\mathbf{x}$, and $y$ is the target label of $\mathbf{x}$. Here, $\| \hat{y} - y \|$ represents the distance calculated as $\min(|\hat{y} - y|, q - |\hat{y} - y|)$ to account for the periodic nature of modulo $q$ (i.e., $0$ and $q-1$ are near). $\mathbbm{1}_{S}$ is an indicator function of a set $S$ satisfying $\mathbbm{1}_{S}(v) = 1$ if $v \in S$ and $0$ otherwise.
This metric relaxes the strict exact-match condition by measuring the proportion of predictions that fall within a tolerance margin $\tau q$ from the true target value.

\paragraph{Results.}
The experimental results show that the proposed method maintains high sample efficiency and excellent scalability even in large-scale modular addition. As shown in  \cref{tab:tau_accuracy_comparison_by_sample_size_angle}, even in these larger-scale settings, our method consistently achieves higher $\tau$-accuracy than the baseline method given the same sample size. Specifically, under the condition of $1\text{M}$ samples and $N=128$, the Baseline method shows almost no improvement in $\tau$-accuracy, whereas our method records high $\tau$-accuracy. Furthermore, comparing the results of our method trained on $1\text{M}$ samples with those of the Baseline method trained on $10\text{M}$ samples reveals comparable high performance. Notably, in the highly complex setting where $N=128$, our method using $1\text{M}$ samples outperforms the Baseline method using $10\text{M}$ samples.

\begin{table*}[tbp]
    \centering
    \caption{Comparison of $\tau$-accuracy (\%) ($\tau \in \{0.05, 0.1\}$) between our method and the baseline~\citep{conf/icml/SaxenaAWL25} using $100\text{K}$ and $1\text{M}$ training samples for angular embedding. Our method consistently achieves higher $\tau$-accuracy even in the large-scale modular addition settings based on the prior work.}
    \label{tab:tau_accuracy_comparison_by_sample_size_angle}
    \begin{tabular}{cccccccccc}
      \toprule
      & & \multicolumn{4}{c}{$\tau=0.1$ Accuracy (\%)} & \multicolumn{4}{c}{$\tau=0.05$ Accuracy (\%)} \\
      \cmidrule(lr){3-6} \cmidrule(lr){7-10}
      & & \multicolumn{2}{c}{$1\text{M}$} & \multicolumn{2}{c}{$10\text{M}$} & \multicolumn{2}{c}{$1\text{M}$} & \multicolumn{2}{c}{$10\text{M}$} \\
      \cmidrule(lr){3-4} \cmidrule(lr){5-6} \cmidrule(lr){7-8} \cmidrule(lr){9-10}
      $N$ & $q$ & Ours & Baseline & Ours & Baseline & Ours & Baseline & Ours & Baseline \\
      \midrule
      16 & 257 & \textbf{100.0} & 98.4 & \textbf{100.0} & 100.0 & \textbf{100.0} & 92.1 & \textbf{100.0} & 99.8 \\
       & 3329 & \textbf{100.0} & 98.3 & \textbf{100.0} & 100.0 & \textbf{100.0} & 92.5 & \textbf{100.0} & 99.9 \\
       & 42899 & \textbf{100.0} & 98.6 & \textbf{100.0} & 100.0 & \textbf{99.9} & 93.3 & \textbf{100.0} & 99.9 \\
       & 974269 & \textbf{100.0} & 98.4 & \textbf{100.0} & 100.0 & \textbf{100.0} & 93.2 & \textbf{100.0} & 99.9 \\
      \midrule
      32 & 257 & \textbf{100.0} & 95.9 & \textbf{100.0} & 99.9 & \textbf{99.1} & 82.4 & \textbf{100.0} & 99.6 \\
       & 3329 & \textbf{99.9} & 95.9 & \textbf{100.0} & 99.9 & \textbf{99.2} & 82.3 & \textbf{100.0} & 99.5 \\
       & 42899 & \textbf{100.0} & 95.6 & \textbf{100.0} & 99.9 & \textbf{99.4} & 81.5 & \textbf{100.0} & 99.5 \\
       & 974269 & \textbf{99.7} & 95.2 & \textbf{100.0} & 100.0 & \textbf{97.8} & 79.9 & \textbf{100.0} & 99.5 \\
      \midrule
      64 & 257 & \textbf{99.9} & 80.2 & \textbf{100.0} & 99.8 & \textbf{98.1} & 53.3 & \textbf{100.0} & 98.8 \\
       & 3329 & \textbf{99.8} & 70.5 & \textbf{100.0} & 99.8 & \textbf{98.0} & 43.4 & \textbf{100.0} & 98.7 \\
       & 42899 & \textbf{99.7} & 62.1 & \textbf{100.0} & 99.7 & \textbf{97.6} & 36.5 & \textbf{100.0} & 98.2 \\
       & 974269 & \textbf{99.7} & 82.4 & \textbf{100.0} & 99.7 & \textbf{97.8} & 55.8 & \textbf{100.0} & 98.5 \\
      \midrule
      128 & 257 & \textbf{99.1} & 10.3 & \textbf{100.0} & 99.1 & \textbf{96.4} & 5.0 & \textbf{100.0} & 93.9 \\
       & 3329 & \textbf{99.6} & 7.6 & \textbf{100.0} & 99.0 & \textbf{97.2} & 3.9 & \textbf{100.0} & 93.1 \\
       & 42899 & \textbf{98.4} & 23.4 & \textbf{100.0} & 99.1 & \textbf{94.4} & 11.9 & \textbf{99.6} & 93.8 \\
       & 974269 & \textbf{99.4} & 18.6 & \textbf{100.0} & 99.2 & \textbf{97.0} & 9.5 & \textbf{100.0} & 93.9 \\
      \bottomrule
    \end{tabular}
\end{table*}

\subsection{Sensitivity to hyperparameters} \label{ex:ex3}
We evaluate the hyperparameter sensitivity of our approach by reporting the average, minimum, and maximum match accuracies over the grid search space ($K \in \{4, \dots, 9\}, r \in \{0.1, \dots, 0.4\}$). These results are then compared against the baseline from \cref{ex:ex1}.

\paragraph{Robustness to hyperparameter selection.}
As summarized in \cref{tab:average_accuracy_comparison}, our method demonstrates highly stable learning performance across various combinations of $K$ and $r$. The narrow gap between the minimum and maximum accuracies indicates that the model yields consistently high performance without relying on heavy parameter tuning. Furthermore, even the minimum accuracy within this search space shows a clear improvement over the baseline result from \cref{ex:ex1}. For instance, under the $N=64, q=433$ setting with angular embedding, our minimum accuracy of $34.0\%$ exceeds the baseline's $11.4\%$. Similarly, for token embedding ($N=16, q=97$), the worst-case accuracy in our grid still substantially surpasses the baseline's $1.1\%$. This confirms that our approach is inherently robust to hyperparameter choices.Detailed results across a broader range ($K \in \{2, \dots, 10\}, r \in \{0.1, \dots, 0.9\}$) are provided in \cref{sec:additional_heatmaps}.

\begin{table}[htbp]
    \centering
    \caption{Minimum, average, and maximum match accuracy (\%) of the proposed method calculated across the practical grid search space ($K \in \{4, \dots, 9\}, r \in \{0.1, \dots, 0.4\}$). For reference, the baseline result from~\cref{tab:accuracy_comparison_by_sample_size} is also provided. The improvement column reports average accuracy minus baseline (\%). Our method achieves higher match accuracy even on average, demonstrating strong hyperparameter robustness.}
    \label{tab:average_accuracy_comparison}
    \begin{tabular}{lrrrr|r}
        \toprule
        \multirow{2}{*}{Embedding} & \multicolumn{2}{c}{Problem Setting} & \multicolumn{1}{c}{Ours} & \multicolumn{1}{c}{Baseline} & \multicolumn{1}{c}{Gain} \\
        \cmidrule(lr){2-3}
        & $N$ & $q$ & Min /Avg. /Max & (from \cref{tab:accuracy_comparison_by_sample_size}) & Avg. $-$ Base. \\
        \midrule
        \multirow{9}{*}{Token}
        & 8  & 31  & 99.2~/~\textbf{99.5}~/~99.6 & 98.6 & \texttt{+}0.90 \\
        &    & 97  & 90.1~/~\textbf{93.1}~/~96.7 & 74.9 & \texttt{+}18.2 \\
        &    & 257 & 27.3~/~\textbf{33.8}~/~37.0 & 0.5 & \texttt{+}33.3 \\
        & 16 & 31  & 98.1~/~\textbf{98.9}~/~99.4 & 80.7 & \texttt{+}18.2 \\
        &    & 97  & 76.1~/~\textbf{81.1}~/~84.2 & 1.1 & \texttt{+}80.0 \\
        &    & 257 & 14.2~/~\textbf{23.9}~/~27.1 & 0.4 & \texttt{+}23.5 \\
        & 32 & 31  & 92.7~/~\textbf{97.2}~/~98.5 & 38.7 & \texttt{+}58.5 \\
        &    & 97  & 46.0~/~\textbf{63.6}~/~69.6 & 1.0 & \texttt{+}62.6 \\
        &    & 257 & 0.4~/~\textbf{13.5}~/~17.8  & 0.3 & \texttt{+}13.2 \\
        \midrule
        \multirow{9}{*}{Angular}
        & 16 & 97  & 99.7~/~\textbf{99.9}~/~100 & 92.6 & \texttt{+}7.30 \\
        &    & 257 & 97.4~/~\textbf{99.1}~/~99.8  & 63.8 & \texttt{+}35.3 \\
        &    & 433 & 90.2~/~\textbf{96.6}~/~99.4   & 45.7 & \texttt{+}50.9 \\
        & 32 & 97  & 98.0~/~\textbf{99.3}~/~99.9  & 81.1 & \texttt{+}18.2 \\
        &    & 257 & 84.3~/~\textbf{93.5}~/~99.0   & 44.9 & \texttt{+}48.6 \\
        &    & 433 & 61.8~/~\textbf{83.8}~/~94.4   & 27.7 & \texttt{+}56.1 \\
        & 64 & 97  & 91.8~/~\textbf{96.5}~/~99.5   & 58.7 & \texttt{+}37.8 \\
        &    & 257 & 64.7~/~\textbf{80.5}~/~94.1   & 23.4 & \texttt{+}57.1 \\
        &    & 433 & 34.0~/~\textbf{54.7}~/~78.4   & 11.4 & \texttt{+}43.3 \\
        \bottomrule
    \end{tabular}
\end{table}

\begin{wrapfigure}{R}{0.5\textwidth}
    \vspace{-1.25\baselineskip}
        \includegraphics[width=1.\linewidth]{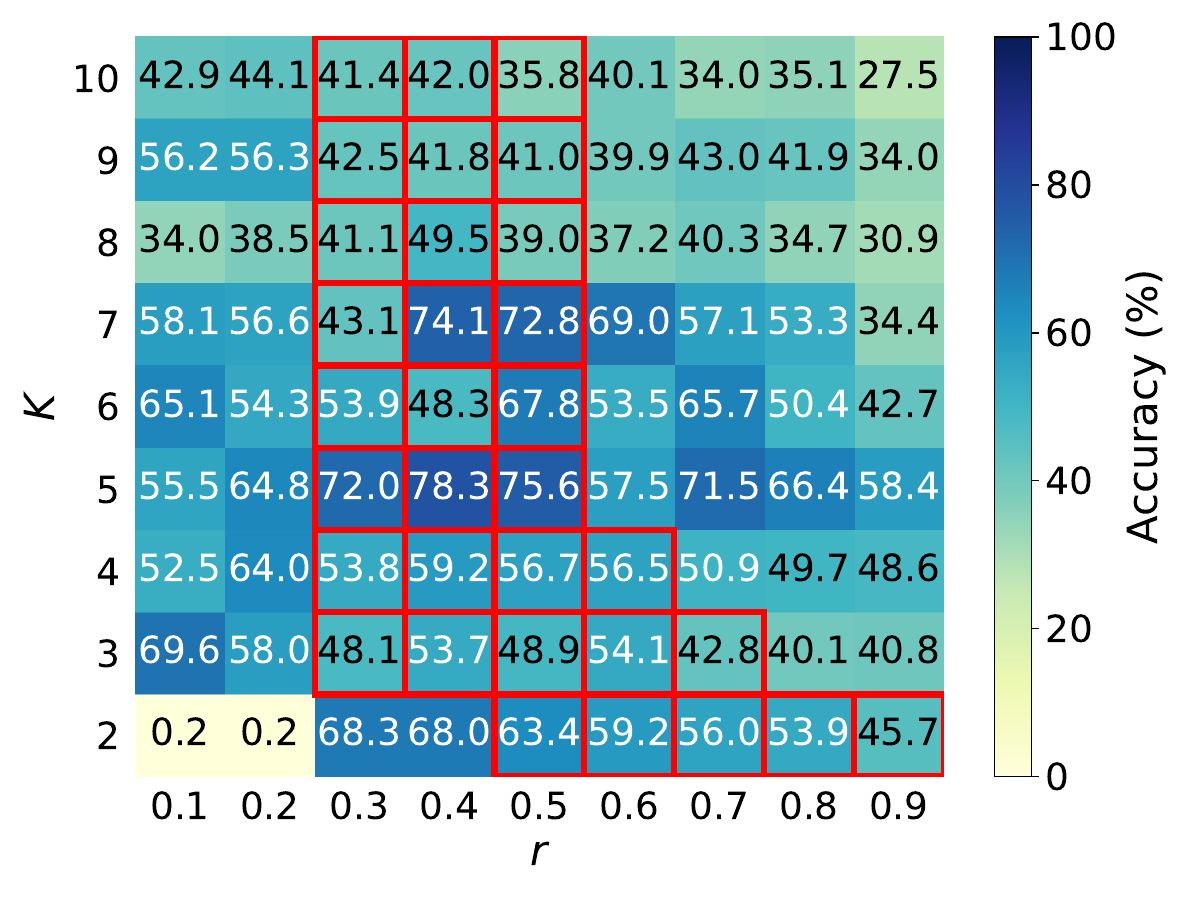}\\[3pt]
        \captionsetup{skip=-15pt}%
        \caption{Heatmap of match accuracy for modular addition with angular embeddings ($N=64, q=433$). The red box highlights the region where the ratio $\rho$ in \cref{eq:wrap_scale_factor_ratio} is between $0.8$ and $1.2$, corresponding to the theoretical difficulty of the prior work shown in \cref{fig:wrap_scale_factor_heatmap}.}
        \label{fig:red}
    \vspace{-1.\baselineskip}
   \end{wrapfigure}
   
\paragraph{Relationship between accuracy and difficulty.}
While the proposed method is robust on average, the accuracy can fluctuate significantly depending on the configuration. \cref{fig:red} presents a heatmap of a setting with such high variance to examine the underlying trends. The red box highlights the region where the ratio in \cref{eq:wrap_scale_factor_ratio}--defined as a ratio of expected numbers of wraps around the modulus by the proposed and sparse methods--lies between $0.8$ and $1.2$. As shown in \cref{subsec:analysis}, in the case when the ratio equals $1.0$, the difficulty of a problem under the proposed method theoretically coincides with that under the sparse method. Although high accuracy is not exclusively confined to this box, and the theoretical range does not guarantee uniformly high performance, the region where the ratio approaches $1.0$ encompasses the optimal combinations of $K$ and $r$. The region where the ratio approaches $1.0$ encompasses the high-performing combinations of $K$ and $r$.  Indeed, relatively high-accuracy regions, such as $K \in \{5, \dots, 7\}$ and $r \in \{0.4, 0.5\}$, largely overlap with this theoretical range.

\section{Conclusion}\label{ch:conclusion}
We proposed a novel training method for stable modular addition learning by introducing an expanded modulus $Kq$ as an auxiliary task alongside the target modulus $q$. Unlike the sparse method \citep{conf/icml/SaxenaAWL25}, which changes the input distribution during training, our approach keeps the input distribution unchanged and thus avoids this source of covariate shift.
Theoretical analysis showed that our method reduces problem difficulty similarly to sparse training, but without modifying the input distribution.

Experimental results demonstrated that our method improves learning efficiency and accuracy across various conditions of terms $N$, moduli $q$, and data sizes. Notably, our method achieves high accuracy even under conditions of large $N$, large $q$, and small data sizes, where baseline methods fail to learn. Furthermore, it achieves equal or superior match accuracy and $\tau$-accuracy even when the training data size is reduced to one-tenth. These results indicate that methods avoiding distribution shifts possess high learning efficiency and scalability.

\paragraph{Limitations and future work.} Our method is limited to modular addition tasks. In future work, we aim to extend this method to other operations, such as modular multiplication. Additionally, establishing a more efficient optimization method for the hyperparameters $K$ and $r$, which are currently determined by grid search, remains an important challenge.  Investigating the relationship between the theoretical difficulty ratio in \cref{eq:wrap_scale_factor_ratio} and empirical performance is left for future work. Based on this relationship, we aim to develop a principled search strategy that constrains the parameter space. We will also conduct a detailed analysis of how the problem difficulty changes.

\section*{Acknowledgments}
This research was partially supported by JST PRESTO Grant Number JPMJPR24K, JST BOOST Program Grant Number JPMJBY24C6, and JSPS Program for Forming Japan’s Peak Research Universities (J-PEAKS) Grant Number JPJS00420230002, Mitsubishi Electric Information Technology R\&D Center, and JSPS KAKENHI Grant Number 26K02996.

\newpage
\bibliographystyle{plainnat}
\bibliography{bib}

\clearpage
\appendix
\section{Formal restatement and proof of \cref{prop:gap-sparse}}\label{sec:proof-gap-sparse}

\Cref{prop:gap-sparse} in the main text is informal in two respects: it leaves the loss 
unspecified and identifies ``the expected loss on a uniformly random zero-free input'' without 
fixing a measure. The formal restatement below makes both precise; \cref{tab:tv-numerical} 
reports the numerical value of \cref{eq:gap-tv} for parameter ranges typical of 
modular-addition experiments.

\begin{proposition}[Formal version of \cref{prop:gap-sparse}]\label{prop:gap-sparse-formal}
Let \(\mathcal{X}:=\{0,\dots,q-1\}^N\) and \(n_0(x):=\#\{i\in[N]:x_i=0\}\). 
Suppose \(P_{\mathrm{te}}\) is uniform on \(\mathcal{X}\), and \(P_{\mathrm{tr}}\) is the sparse 
construction of \mbox{\citet{conf/icml/SaxenaAWL25}}: sample \(z\sim g\) with 
\(g(z)\propto 1/\sqrt{N-z+1}\) for \(z\in\{1,\dots,N\}\), choose \(z\) positions uniformly at 
random, fill them independently and uniformly from \(\{0,\dots,q-1\}\), and set the remaining 
\(N-z\) positions to zero. 
Let \(L_f(x)\ge 0\) be a non-negative pointwise loss, and let 
\(\bar L_f(0):=\mathbb{E}_{x\sim\mathrm{Unif}(\{x:n_0(x)=0\})}[L_f(x)]\) be the loss averaged 
over a uniformly random zero-free input. 
If \(\bar L_f(0)\ge\varepsilon\) and \(R_{\mathrm{tr}}(f)\le\delta\), then
\begin{align*}
    R_{\mathrm{te}}(f)-R_{\mathrm{tr}}(f)
    \;\ge\;
    \varepsilon\!\left(1-\tfrac{1}{q}\right)^{\!N}-\delta.
\end{align*}
\end{proposition}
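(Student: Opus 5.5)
The plan is to lower-bound the test risk by restricting to the zero-free event, and then subtract the training risk. The training-risk step is trivial; all the work is in the test risk.

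\textbf{Decomposing the test risk.} Write
\[
R_{\mathrm{te}}(f)=\mathbb{E}_{x\sim P_{\mathrm{te}}}[L_f(x)].
\]
Since \(L_f\ge 0\), dropping every input with \(n_0(x)\ge 1\) can only decrease the sum:
\begin{align*}
R_{\mathrm{te}}(f)=\sum_{x\in\mathcal{X}} q^{-N}L_f(x)\;\ge\;\sum_{x:\,n_0(x)=0} q^{-N}L_f(x).
\end{align*}

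\textbf{Identifying the restricted sum.} Next I would note that the zero-free set \(\{x:n_0(x)=0\}=\{1,\dots,q-1\}^N\) has cardinality \((q-1)^N\). Hence the uniform test distribution conditioned on this event is exactly \(\mathrm{Unif}(\{x:n_0(x)=0\})\). It follows that
\[
\sum_{x:\,n_0(x)=0} q^{-N}L_f(x)=\Bigl(\tfrac{q-1}{q}\Bigr)^{N}\,\bar L_f(0)\;\ge\;\Bigl(1-\tfrac1q\Bigr)^{N}\varepsilon.
\]
This conditioning step is the only place where the measure in the definition of \(\bar L_f(0)\) has to be matched carefully against \(P_{\mathrm{te}}\). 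Because both are uniform, the match is exact.

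\textbf{Subtracting the training risk.} Finally, subtracting \(R_{\mathrm{tr}}(f)\le\delta\) gives
\[
R_{\mathrm{te}}(f)-R_{\mathrm{tr}}(f)\ge \varepsilon(1-1/q)^N-\delta.
\]
The specific form of \(P_{\mathrm{tr}}\), namely the inverse-square-root law \(g\), plays no role beyond ensuring that \(R_{\mathrm{tr}}(f)\) is well defined.

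\textbf{Main obstacle.} There is no real obstacle; the argument is elementary. The one point needing care is that \(\bar L_f(0)\) is defined with respect to the uniform measure on the zero-free inputs. That measure is precisely the conditional law of \(P_{\mathrm{te}}\) given \(n_0=0\), not the conditional law of \(P_{\mathrm{tr}}\).
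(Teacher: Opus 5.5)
Your proof is correct and follows the paper's argument: both use $L_f\ge 0$ to lower-bound $R_{\mathrm{te}}(f)$ by the contribution of the zero-free inputs, identify that contribution as $(1-1/q)^N\,\bar L_f(0)$ via the uniform conditional law, and then subtract $R_{\mathrm{tr}}(f)\le\delta$. The only cosmetic difference is how the prefactor is obtained: you count directly, $(q-1)^N/q^N$, whereas the paper conditions on $n_0(x)$ and evaluates $\Pr[\mathrm{Bin}(N,1/q)=0]$.
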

\begin{proof}
\(P_{\mathrm{te}}\) is permutation-invariant in the coordinates of \(x\), so conditional on 
\(n_0(x)=0\) it is uniform on the set \(\{x:n_0(x)=0\}\) of zero-free inputs. 
Therefore \(\mathbb{E}_{P_{\mathrm{te}}}[L_f(x)\mid n_0(x)=0]=\bar L_f(0)\). 
Decomposing \(R_{\mathrm{te}}(f)\) by conditioning on \(n_0(x)\) and dropping the non-negative 
contributions from \(n_0(x)\ne 0\),
\begin{align*}
    R_{\mathrm{te}}(f)
    \;=\;
    \sum_{k=0}^{N} P_{\mathrm{te}}(n_0(x)=k)\,\mathbb{E}_{P_{\mathrm{te}}}\!\left[L_f(x)\mid n_0(x)=k\right]
    \;\ge\;
    P_{\mathrm{te}}(n_0(x)=0)\,\bar L_f(0)
    \;\ge\;
    \left(1-\tfrac{1}{q}\right)^{\!N}\!\varepsilon,
\end{align*}
where the last step uses 
\(P_{\mathrm{te}}(n_0(x)=0)=\Pr_{Z\sim\mathrm{Bin}(N,1/q)}[Z=0]=(1-1/q)^N\) (since under 
the uniform distribution each entry independently equals \(0\) with probability \(1/q\)) and 
\(\bar L_f(0)\ge\varepsilon\). 
Subtracting \(R_{\mathrm{tr}}(f)\le\delta\) yields the bound.
\end{proof}

\begin{table}[th]
\centering
\caption{Numerical lower bound 
\(\mathrm{LB}(q,N):=(1-1/q)^N\) (the prefactor in \cref{eq:gap-tv}) 
for parameter ranges typical of modular-addition experiments. The reported value bounds 
\(R_{\mathrm{te}}(f)-R_{\mathrm{tr}}(f)\) from below by \(\varepsilon\cdot\mathrm{LB}(q,N)-\delta\).}
\label{tab:tv-numerical}
\begin{tabular}{c|cccc}
\toprule
$N\backslash q$ & 17 & 23 & 47 & 113 \\
\midrule
8  & 0.61 & 0.70 & 0.84 & 0.93 \\
16 & 0.38 & 0.49 & 0.71 & 0.87 \\
24 & 0.23 & 0.34 & 0.60 & 0.81 \\
32 & 0.14 & 0.24 & 0.50 & 0.75 \\
\bottomrule
\end{tabular}
\end{table}

\section{Robustness to Model Architecture Variations}
\label{app:model_stability}
We experimentally observed that the performance of the sparse method in \citet{conf/icml/SaxenaAWL25} is highly sensitive to the underlying model architecture. To systematically investigate the extent of this instability, we evaluated the match accuracy across 16 different model configurations. Our results demonstrate that the baseline method suffers from severe performance fluctuations, heavily depending on specific architectural choices. 

Following the base experimental setup of \cref{ex:ex1}.
These 16 configurations were generated by exhaustively combining four key architectural hyperparameters: the position of the normalization layer, the inclusion of bias terms, the weight initialization strategy, and the dropout rate. The specific settings for each component are detailed in \cref{tab:model_hyperparameters}. The experiment was conducted using token embedding with a sequence length of $N=8$ and a modulus of $q=97$.

\Cref{tab:model_stability_results} presents the match accuracy across all 16 configurations. The results clearly illustrate that the sparse method in \citet{conf/icml/SaxenaAWL25} is highly sensitive to minor architectural choices, frequently suffering from severe performance degradation. For example, under the Pre-Norm and Default initialization setting without dropout, simply removing the bias terms causes the baseline's accuracy to plummet drastically from $74.9\%$ to a mere $1.2\%$. Such extreme fragility to standard model variations serves as a strong motivation for exploring a new, more robust approach. While our proposed method also exhibits some performance variations depending on the configuration (such as a general decrease in accuracy when dropout is applied), the magnitude of these drops is significantly smaller. In the same example of removing the bias terms, our method maintains a consistently high accuracy (shifting only from $90.1\%$ to $91.5\%$), thereby confirming its superior structural stability.

\begin{table}[htbp]
  \centering
  \caption{Details of the hyperparameter choices for the model architecture variations: normalization layer, bias term, weight initialization, and dropout.}
  \label{tab:model_hyperparameters}
  \begin{tabularx}{0.95\linewidth}{llX}
    \toprule
    Component & Value & Description \\
    \midrule
    \multirow{2}{*}{Normalization} 
        & Pre-Norm & Layer normalization is applied before each sub-layer. \\
        & Post-Norm & Layer normalization is applied after each sub-layer. \\
    \addlinespace
    \multirow{2}{*}{Bias Term} 
        & Incl. & Bias terms are included in all learnable layers. \\
        & Excl. & Bias terms are removed from all learnable layers. \\
    \addlinespace
    \multirow{3}{*}{Weight Initialization} 
        & $\sigma=0.02$ & Linear and embedding layers are initialized from a normal distribution $\mathcal{N}(0, 0.02^2)$. \\
        & Default & Embedding layers are initialized from $\mathcal{N}(0, 1)$, and linear layers use Kaiming initialization \citep{DBLP:conf/iccv/HeZRS15}. \\
    \addlinespace
    \multirow{2}{*}{Dropout} 
        & 0.0 & No dropout is applied. \\
        & 0.1 & Dropout rate is set to 0.1. \\
    \bottomrule
  \end{tabularx}
\end{table}

\begin{table}[htbp]
  \centering
  \caption{Match accuracy (\%) across 16 model configurations for token embedding ($N=8, q=97$). The sparse method in \citet{conf/icml/SaxenaAWL25} suffers from severe instability, whereas the proposed method consistently achieves high match accuracy regardless of the architecture.}
  \label{tab:model_stability_results}
  \begin{tabular}{llll cc}
    \toprule
    \multicolumn{4}{c}{Model Configuration} & \multicolumn{2}{c}{Match Accuracy (\%)} \\
    \cmidrule(lr){1-4} \cmidrule(lr){5-6}
    Normalization & Bias & Initialization & Dropout & Sparse\citep{conf/icml/SaxenaAWL25} & Ours \\
    \midrule
    \multirow{8}{*}{Pre-Norm} 
        & \multirow{4}{*}{Incl.} 
            & \multirow{2}{*}{$\sigma=0.02$} 
                & 0.0 & 5.1 & 73.5 \\
            & & & 0.1 & 10.2 & 33.9 \\
            \cmidrule{3-6}
            & & \multirow{2}{*}{Default} 
                & 0.0 & 74.9 & 90.1 \\
            & & & 0.1 & 97.2 & 17.5 \\
        \cmidrule{2-6}
        & \multirow{4}{*}{Excl.} 
            & \multirow{2}{*}{$\sigma=0.02$} 
                & 0.0 & 1.5 & 77.7 \\
            & & & 0.1 & 4.8 & 33.8 \\
            \cmidrule{3-6}
            & & \multirow{2}{*}{Default} 
                & 0.0 & 1.2 & 91.5 \\
            & & & 0.1 & 98.5 & 14.2 \\
    \midrule
    \multirow{8}{*}{Post-Norm} 
        & \multirow{4}{*}{Incl.} 
            & \multirow{2}{*}{$\sigma=0.02$} 
                & 0.0 & 5.7 & 75.8 \\
            & & & 0.1 & 25.3 & 35.2 \\
            \cmidrule{3-6}
            & & \multirow{2}{*}{Default} 
                & 0.0 & 6.2 & 92.5 \\
            & & & 0.1 & 18.6 & 33.3 \\
        \cmidrule{2-6}
        & \multirow{4}{*}{Excl.} 
            & \multirow{2}{*}{$\sigma=0.02$} 
                & 0.0 & 1.1 & 74.0 \\
            & & & 0.1 & 0.9 & 35.0 \\
            \cmidrule{3-6}
            & & \multirow{2}{*}{Default} 
                & 0.0 & 7.2 & 90.1 \\
            & & & 0.1 & 23.8 & 27.4 \\
    \bottomrule
  \end{tabular}
\end{table}

\section{A proof of \cref{prop:expect_num_wraps} and more analysis of \cref{eq:wrap_function}}\label{sec:detail_analysis}

    In this section, we give a proof of \cref{prop:expect_num_wraps} , and describe properties of a random variable in \cref{eq:wrap_function}, which represents the number of wraps around the modulus in the proposed methods. 
    We introduce the notations used throughout this section.
    Let $N$, $K$ and $q$ be positive integers. Let $\bx = [x_1, \ldots, x_N]$ be a vector of $N$ i.i.d. random variables with a discrete distribution $U_q(x)=1/q$ if $x\in \{0, \ldots, q-1\}$ otherwise $0$. Put $S_M=\sum_{i=1}^M x_i$ for $0\leq M\leq N$.

    \subsection{A proof of \cref{prop:expect_num_wraps}}

    Let $z$ be a random variable that represents the number of non-zero entries of an input data, and has a distribution $f_{\mathrm{inv\_sqrt}}(z) \propto 1/\sqrt{N - z + 1}$ introduced in \citet{conf/icml/SaxenaAWL25}; explicitly, $g(z)=1/(C_N\sqrt{N - z + 1})$ where $C_N = \sum_{z=0}^{N} 1/\sqrt{N - z + 1}$. We define three random variables $X_0$, $X_1$ and $X_2$ as follows:
   \begin{align*}
        X_0 &= \frac{S_N}{q}, &
        X_1 &= (1-r)\frac{S_N}{q} + r\frac{S_N}{Kq}, &
        X_2 &= \frac{S_z}{q}. &
   \end{align*}  

   \begin{proposition}[\cref{prop:expect_num_wraps}]\label{prop:expect_num_warps_N}
    Keep the above notation. For $X_0$ and $X_1$, we have
    \begin{align*}
        \bE[X_0] &= \frac{N(q-1)}{2q}, &
        \bE[X_1] &= \left((1-r) + r\frac{1}{K}\right)\frac{N(q-1)}{2q} =\left((1-r) + r\frac{1}{K}\right)\bE[X_0].
    \end{align*}
    For $X_2$, when $N$ is large enough, we have
    \begin{align*}
        \bE[X_2] &\approx \frac{2}{3}\frac{N(q-1)}{2q}=\frac{2}{3}\bE[X_0].
    \end{align*}
   \end{proposition}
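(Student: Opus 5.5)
The plan is to reduce all three expectations to first moments of the summands, using linearity of expectation and, for $X_2$, conditioning on $z$. The one genuinely asymptotic ingredient is an estimate of $\bE[z]$ under the inverse-square-root law $g$.

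\emph{Step 1 ($X_0$ and $X_1$).} Each $x_i$ is uniform on $\{0,\dots,q-1\}$, so $\bE[x_i]=\frac{1}{q}\sum_{j=0}^{q-1}j=\frac{q-1}{2}$. Linearity gives $\bE[S_N]=N(q-1)/2$, and hence $\bE[X_0]=N(q-1)/(2q)$. Since $X_1=\bigl((1-r)+r/K\bigr)X_0$ is a deterministic rescaling of $X_0$, linearity again gives the stated formula for $\bE[X_1]$. There is nothing delicate here.

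\emph{Step 2 (reducing $X_2$ to $\bE[z]$).} In the sparse construction, $z$ is drawn independently of the values filling the $z$ chosen positions. Those values are i.i.d.\ uniform on $\{0,\dots,q-1\}$; they may themselves be zero, but that does not change their mean. Conditioning on $z$ therefore gives $\bE[S_z\mid z]=z(q-1)/2$, and so
\begin{align*}
\bE[X_2]=\frac{\bE[z]\,(q-1)}{2q}.
\end{align*}
This is a Wald-type identity, valid because $z$ is bounded and independent of the $x_i$. It remains to show $\bE[z]\approx 2N/3$.

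\emph{Step 3 (the main step: asymptotics of $\bE[z]$).} Substitute $m=N-z+1$ and write $M$ for the largest value of $m$. Following the normalisation $C_N$ in the appendix, $z$ ranges over $\{0,\dots,N\}$, so $m$ ranges over $\{1,\dots,M\}$ with $M=N+1$; if $z\in\{1,\dots,N\}$ instead, then $M=N$, which changes nothing asymptotically. Under this substitution $g$ puts mass proportional to $m^{-1/2}$ on $m$, so
\begin{align*}
\bE[N-z+1]=\frac{\sum_{m=1}^{M}m^{1/2}}{\sum_{m=1}^{M}m^{-1/2}}.
\end{align*}
Both sums are monotone, so comparing each with its integral gives
\begin{align*}
\sum_{m=1}^{M}m^{1/2}&=\tfrac23 M^{3/2}+O(M^{1/2}), &
\sum_{m=1}^{M}m^{-1/2}&=2M^{1/2}+O(1).
\end{align*}
The ratio is therefore $M/3+O(1)$, which yields $\bE[z]=N+1-\bigl(M/3+O(1)\bigr)=\tfrac23 N+O(1)$.

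Substituting into Step 2 gives $\bE[X_2]=\bigl(\tfrac23+O(1/N)\bigr)\bE[X_0]$, which is the precise content of ``$\approx$ for $N$ large enough.'' The only place where care is needed is keeping the error terms in the integral comparison uniform, so that the relative error is $O(1/N)$.
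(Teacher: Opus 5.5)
Your argument is the paper's: condition on $z$ to get $\bE[X_2]=\frac{q-1}{2q}\bE[z]$, reindex by $k=N-z+1$ so that $\bE[z]=(N+1)-\sum_k k^{1/2}\big/\sum_k k^{-1/2}$, and compare both sums with $\int x^{\pm 1/2}\,dx$ (the paper phrases this as Riemann sums). That correctly gives $\bE[X_2]/\bE[X_0]\to 2/3$, which is all the proposition claims.

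However, the quantitative claims in Step 3 are wrong. From your bounds $\sum_{m\le M}m^{1/2}=\tfrac23 M^{3/2}+O(M^{1/2})$ and $\sum_{m\le M}m^{-1/2}=2M^{1/2}+O(1)$ you only get
\[
\frac{\tfrac23 M^{3/2}\bigl(1+O(M^{-1})\bigr)}{2M^{1/2}\bigl(1+O(M^{-1/2})\bigr)}=\frac{M}{3}+O(M^{1/2}),
\]
not $M/3+O(1)$. The additive $O(1)$ in the denominator is a relative error of order $M^{-1/2}$, and it gets multiplied by $M/3$.

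This is not an artifact of crude bounds. Euler--Maclaurin gives $\sum_{m\le M}m^{-1/2}=2\sqrt M+\zeta(1/2)+O(M^{-1/2})$ with $\zeta(1/2)\approx-1.46\neq 0$. Carrying this through gives
\[
\bE[z]=\tfrac23 N+\tfrac{\zeta(1/2)}{6}\sqrt N+O(1)\approx\tfrac23 N-0.24\sqrt N
\]
under either convention for the range of $z$. For example, $\bE[z]\approx 64.5$ at $N=100$ with $z\in\{0,\dots,N\}$, versus $\tfrac23 N\approx 66.7$.

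So the true statement is $\bE[X_2]=\bigl(\tfrac23+\Theta(N^{-1/2})\bigr)\bE[X_0]$. The $O(1/N)$ relative error you present as ``the precise content of $\approx$'' is false, and no care about uniformity in the integral comparison can deliver it. Replacing $O(1)$ by $O(\sqrt N)$ and $O(1/N)$ by $O(N^{-1/2})$ repairs the proof.

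The paper's intermediate expression $\frac{2N}{3}+1$ is over-precise for the same reason. It only keeps the leading term $N(q-1)/(3q)$ at the end, so its final claim is unaffected.
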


   \begin{proof}
       Since $x_i$'s are i.i.d. satisfying $\bE[x_i]= \sum_{j=0}^{q-1} j/q = (q-1)/2$, we have $\bE[S_N] = N(q-1)/2$. Hence, the expectations of $X_0$, $X_1$ and $X_2$ are given by
       \begin{align*}
           \bE[X_0] &= \bE\left[\frac{S_N}{q}\right] = \frac{N(q-1)}{2q}, \\
           \bE[X_1] &= \bE\left[(1-r)\frac{S_N}{q} + r\frac{S_N}{Kq}\right] = \left((1-r) + r\frac{1}{K}\right)\frac{N(q-1)}{2q}, \\
           \bE[X_2] 
           &= \sum_{z=0}^{N} \frac{1}{C_N} \frac{1}{\sqrt{N - z + 1}} \bE\left[\frac{S_z}{q}\right]
           = \sum_{z=0}^{N} \frac{1}{C_N} \frac{1}{\sqrt{N - z + 1}} \frac{z(q-1)}{2q}\\
           & = \frac{(q-1)}{2q}\sum_{z=0}^{N} \frac{z}{C_N\sqrt{N - z + 1}}
           = \frac{(q-1)}{2q} 
           \frac{\sum_{z=0}^{N}\frac{z}{\sqrt{N - z + 1}}}{\sum_{z=0}^{N}\frac{1}{\sqrt{N - z + 1}}}.
       \end{align*}
       Hence, the statement holds for $X_0$ and $X_1$. We prove the statement for $X_2$ by showing that $\bE[X_2] \approx 2\bE[X_0]/3$ when $N$ is large enough.

       If $N$ is large enough, then the followings hold:
       \begin{align*}
           \sum_{z=0}^{N}\frac{z}{\sqrt{N - z + 1}} 
            &= \sum_{k=1}^{N+1} \frac{N-k+1}{\sqrt{k}}
            = (N+1)\sum_{k=1}^{N+1} \frac{1}{\sqrt{k}} - \sum_{k=1}^{N+1} \sqrt{k}\\
            &= (N+1)\sqrt{N}\sum_{k=1}^{N+1} \frac{1}{\sqrt{k/N}}\frac{1}{N} - N\sqrt{N}\sum_{k=1}^{N+1} \sqrt{k/N}\frac{1}{N}\\
            &\approx (N+1)\sqrt{N}\int_{0}^{1} \frac{1}{\sqrt{x}} dx - N\sqrt{N}\int_{0}^{1}  \sqrt{x} dx\\
            &= 2(N+1)\sqrt{N} - \frac{2}{3}N\sqrt{N}\\
            &= \frac{4}{3}N\sqrt{N} + 2\sqrt{N}.\\
           \sum_{z=0}^{N}\frac{1}{\sqrt{N - z + 1}} 
            &= \sum_{k=1}^{N+1} \frac{1}{\sqrt{k}}\\
            &= \sqrt{N}\sum_{k=1}^{N+1} \frac{1}{\sqrt{k/N}}\frac{1}{N}\\
            &\approx \sqrt{N}\int_{0}^{1} \frac{1}{\sqrt{x}} dx\\
            &= 2\sqrt{N}
       \end{align*}
       Hence, we have
       \begin{align*}
           \bE[X_2] \approx \frac{q-1}{2q} \frac{\frac{4}{3}N\sqrt{N} + 2\sqrt{N}}{2\sqrt{N}} 
           = \frac{q-1}{2q} \left(\frac{2N}{3} + 1\right)
           \approx \frac{N(q-1)}{3q}.
       \end{align*}
   \end{proof}

    \subsection{Properties of a random variable in \cref{eq:wrap_function}}
     We describe properties of a random variable, $D_{Kq}(\bx)$, which represents the number of wraps around the modulus.
     We recall the definition.
     \begin{align}
            D_{Kq}(\bx) = \left\lfloor\frac{S_N}{Kq}\right\rfloor,
        \end{align}
     
     We begin with the following proposition on the distribution of $S_N$:
    \begin{proposition}\label{prop:distribution_of_S_N}
        Keep the above notation. The distribution of $S_N$ is defined by
        \begin{align*}
            U_q^{*N}(S_N = s) &= 
            \frac{1}{q^N}\sum_{k=0}^{\left\lfloor \frac{s}{q} \right\rfloor}(-1)^k \binom{N}{k}\binom{s-qk+N-1}{N-1},
        \end{align*}
        where $U_q^{*N}$ is the convolution of $N$ copies $U_q$.
    \end{proposition}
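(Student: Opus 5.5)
We will prove the formula with generating functions and an inclusion--exclusion extraction of coefficients. Each $x_i$ is uniform on $\{0,\ldots,q-1\}$, so its probability generating function is
\begin{align*}
    \frac{1}{q}\sum_{j=0}^{q-1} t^j = \frac{1}{q}\cdot\frac{1-t^q}{1-t}.
\end{align*}
The $x_i$ are independent, so the generating function of $S_N$ is the $N$-th power,
\begin{align*}
    \bE[t^{S_N}] = \frac{1}{q^N}\,(1-t^q)^N\,(1-t)^{-N}.
\end{align*}
Hence $U_q^{*N}(S_N=s)$ is $q^{-N}$ times the coefficient of $t^s$ in $(1-t^q)^N(1-t)^{-N}$. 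It remains to compute that coefficient.

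First, expand $(1-t^q)^N=\sum_{k=0}^{N}(-1)^k\binom{N}{k}t^{qk}$ by the binomial theorem. Second, expand $(1-t)^{-N}=\sum_{m\ge 0}\binom{m+N-1}{N-1}t^m$ using the negative binomial series, or equivalently stars and bars. Third, take the Cauchy product. The coefficient of $t^s$ collects the terms with $qk+m=s$, i.e.\ $m=s-qk\ge 0$. This gives
\begin{align*}
    [t^s]\,(1-t^q)^N(1-t)^{-N} = \sum_{k\ge 0,\; qk\le s}(-1)^k\binom{N}{k}\binom{s-qk+N-1}{N-1}.
\end{align*}
The constraint $qk\le s$ is exactly $k\le\lfloor s/q\rfloor$. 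Terms with $k>N$ vanish because $\binom{N}{k}=0$, so the upper limit $\lfloor s/q\rfloor$ in the statement is harmless even when it exceeds $N$. Dividing by $q^N$ gives the claimed expression.

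A combinatorial route gives the same formula and can serve as a check. The quantity $q^N\,U_q^{*N}(S_N=s)$ counts the solutions of $x_1+\cdots+x_N=s$ with $0\le x_i\le q-1$. Let $A_i$ be the set of solutions with $x_i\ge q$. For any $k$ of these sets, substitute $x_i\mapsto x_i-q$ on the chosen coordinates; stars and bars then gives $\binom{s-qk+N-1}{N-1}$ solutions when $s-qk\ge 0$. Inclusion--exclusion over $A_1,\ldots,A_N$ reproduces the sum.

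The main obstacle is bookkeeping rather than substance. We must make sure the summation range matches the statement, which uses $\lfloor s/q\rfloor$ rather than $\min(N,\lfloor s/q\rfloor)$. We must also make sure the binomial $\binom{s-qk+N-1}{N-1}$ is read correctly at the boundary. For $N\ge 1$ and $s-qk\ge 0$ there is no ambiguity. The degenerate case $N=0$, where the formula should give the point mass at $s=0$, can be noted separately. For $s$ outside $\{0,\ldots,N(q-1)\}$ the sum equals zero automatically, since the coefficient identity holds for all $s\ge 0$.
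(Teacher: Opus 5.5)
Your proposal is correct and follows essentially the same route as the paper: write the generating function of $S_N$ as $q^{-N}(1-t^q)^N(1-t)^{-N}$, expand the two factors by the binomial theorem and the negative binomial series, and read off the coefficient of $t^s$ from the Cauchy product with $m=s-qk\ge 0$. Your extra remarks are correct but not needed for the result: terms with $k>N$ vanish, and the inclusion--exclusion count serves only as a cross-check.
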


    \begin{proof}
        Let $G_q(z)$ be the probability generating function of $x_i$'s and let $H_{q,N}(z)$ be the probability generating function of $S_N$. Since $x_i$'s are i.i.d. and $G_q(z)=\sum_{k=0}^{q-1} z^k / q$, for $z\in (-1,1)$ we have
        \begin{align*}
            H_{q,N}(z)
            &= G_q(z)^N\\
            &= \frac{1}{q^N}\left(\sum_{k=0}^{q-1} z^k\right)^N\\
            &= \frac{1}{q^N}\left(\frac{1-z^q}{1-z}\right)^N
        \end{align*}
        On the open interval $(-1,1)$, $1/(1-z)$ is equal to the power series $\sum_{k=0}^{\infty} z^k$ because its radius of convergence is $1$. Therefore, we have
        \begin{align*}
            H_{q,N}(z)
            &= \frac{1}{q^N}\left(\frac{1-z^q}{1-z}\right)^N\\
            &= \frac{1}{q^N}\left((1-z^q)\sum_{k=0}^{\infty} z^k\right)^N\\
            &= \frac{1}{q^N}\left(1-z^q\right)^N \left(\sum_{k=0}^{\infty} z^k\right)^N\\
            &= \frac{1}{q^N}\left(\sum_{l=0}^N (-1)^l \binom{N}{l} z^{lq}\right) \left(\sum_{k=0}^{\infty} \binom{k+N-1}{N-1} z^k\right).\\
        \end{align*}
        Hence, the coefficient of $z^s$ ($s\geq 0$) in $H_{q,N}(z)$ is given by
        \begin{align*}
            \frac{1}{q^N}\sum_{\substack{k,l\geq 0\\ lq+k=s}}^N (-1)^l \binom{N}{l}\binom{k+N-1}{N-1}
            &=\frac{1}{q^N}\sum_{\substack{\frac{s}{q}\geq l \geq 0\\ k=s-lq}}^N (-1)^l \binom{N}{l}\binom{k+N-1}{N-1}.\\
            &= \frac{1}{q^N}\sum_{l=0}^{\left\lfloor \frac{s}{q} \right\rfloor}(-1)^l \binom{N}{l}\binom{s-ql+N-1}{N-1}.
        \end{align*}

    \end{proof}
    Define a random variable $D_{Kq}(\bx)$ as in \cref{eq:wrap_function}. Then, the below proposition holds:
    \begin{proposition}\label{prop:max_min_expecet}
        We have $\max\{0, N(q-1)/(2qK)-1\} \leq \bE[D_{Kq}] \leq N(q-1)/(2qK)$.
        In particular, for $r\in [0,1]$, 
        \begin{align*}
            \bE[r\cdot D_q + (1-r)\cdot D_{Kq}] \leq \frac{N(q-1)}{2q}\left(r + (1-r)\frac{1}{K}\right).
        \end{align*}
    \end{proposition}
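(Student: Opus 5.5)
The plan is to sandwich the floor function between its argument and that argument minus one, then take expectations and use linearity. For every realization of $\bx$ we have
\begin{align*}
    \frac{S_N}{Kq} - 1 < D_{Kq}(\bx) = \left\lfloor \frac{S_N}{Kq}\right\rfloor \leq \frac{S_N}{Kq}.
\end{align*}
Taking expectations is legitimate because $S_N$ is bounded: it takes values in $\{0,\dots,N(q-1)\}$. By the computation already made in the proof of \cref{prop:expect_num_warps_N}, $\bE[S_N] = N(q-1)/2$. Hence
\begin{align*}
    \frac{N(q-1)}{2qK} - 1 \leq \bE[D_{Kq}] \leq \frac{N(q-1)}{2qK}.
\end{align*}
Separately, $D_{Kq}\geq 0$ because $S_N\geq 0$, so $\bE[D_{Kq}]\geq 0$. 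Combining the two lower bounds gives the $\max\{0,\cdot\}$ form, which proves the first claim.

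For the ``in particular'' part, apply the upper bound with $K$ replaced by $1$. This is valid since the argument above used nothing about $K$ beyond it being a positive integer. It gives $\bE[D_q]\leq N(q-1)/(2q)$. Then by linearity, using $r\geq 0$ and $1-r\geq 0$,
\begin{align*}
    \bE[r D_q + (1-r)D_{Kq}] = r\,\bE[D_q] + (1-r)\,\bE[D_{Kq}] \leq \frac{N(q-1)}{2q}\left(r + (1-r)\frac{1}{K}\right).
\end{align*}
This is the stated inequality. Note that the roles of $r$ and $1-r$ are swapped relative to \cref{eq:target_label_aux}, but the bound holds for any convex weights, so this causes no difficulty.

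There is essentially no obstacle; the argument only needs the pointwise floor bounds and $\bE[S_N]$. The explicit distribution in \cref{prop:distribution_of_S_N} is not required. The only point to watch is being careful with the strict versus non-strict inequality $\lfloor t\rfloor > t-1$. This yields $\geq$ after taking expectations, which suffices for the statement.
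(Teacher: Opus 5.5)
Your proof is correct and matches the paper's argument: the paper also uses the pointwise bounds $\max\{0, S_N/(qK)-1\}\leq D_{Kq}\leq S_N/(qK)$ together with $\bE[S_N/(qK)] = N(q-1)/(2qK)$, and handles the weighted sum by linearity. Your remark that $r$ and $1-r$ are swapped relative to \cref{eq:target_label_aux} is accurate and harmless, since the bound holds for any convex weights.
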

    \begin{proof}
        By $\bE[S_N/(qK)] = N(q-1)/(2qK)$ and $\max\{0, S_N/(qK)-1\}\leq D_{Kq}\leq S_N/(qK)$, one can prove the statement.
    \end{proof}

    \begin{proposition}\label{prop:distributiuon_wraps}
        Keep the above notation. Let $P$ be the distribution of $D_{Kq}$. Then, we have
        \begin{align*}
            P(D_{Kq} = 0) = \frac{1}{q^N}\sum_{i=0}^{K-1}(-1)^i\binom{N}{i}\binom{(K-i)q+N-1}{N-1}.
        \end{align*}
    \end{proposition}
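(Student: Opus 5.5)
The plan is to express the event $\{D_{Kq}=0\}$ in terms of $S_N$ and then sum the point masses from \cref{prop:distribution_of_S_N}. Since $S_N\ge 0$, we have $D_{Kq}(\bx)=\lfloor S_N/(Kq)\rfloor=0$ if and only if $0\le S_N\le Kq-1$. Hence
\begin{align*}
P(D_{Kq}=0)=\sum_{s=0}^{Kq-1}U_q^{*N}(S_N=s)
=\frac{1}{q^N}\sum_{s=0}^{Kq-1}\sum_{k=0}^{\lfloor s/q\rfloor}(-1)^k\binom{N}{k}\binom{s-qk+N-1}{N-1}.
\end{align*}

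The first step is to swap the two sums. For $0\le s\le Kq-1$ we have $\lfloor s/q\rfloor\le K-1$. The condition $k\le\lfloor s/q\rfloor$ is equivalent to $s\ge kq$. So the index set becomes $0\le k\le K-1$ with $kq\le s\le Kq-1$. This already produces the outer range $i=0,\dots,K-1$ and the factor $(-1)^i\binom{N}{i}$ that appear in the statement.

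The second step is to reindex the inner sum by $t=s-kq$, which runs over $0\le t\le (K-k)q-1$. We then evaluate $\sum_{t=0}^{m-1}\binom{t+N-1}{N-1}$ with $m=(K-k)q$ using the hockey-stick identity. Equivalently, this is the coefficient extraction $[z^{m-1}]\,(1-z)^{-(N+1)}$, obtained by applying the factor $1/(1-z)$ to $H_{q,N}$ from the proof of \cref{prop:distribution_of_S_N}.

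The third step is to identify the resulting closed form with the summand $\binom{(K-i)q+N-1}{N-1}$ of the statement, and I expect this matching to be the main obstacle. The hockey stick gives $\binom{m+N-1}{N}$, not $\binom{m+N-1}{N-1}$. The two differ by Pascal's rule, $\binom{m+N}{N}-\binom{m+N-1}{N}=\binom{m+N-1}{N-1}$, and summing that difference over the $i$-terms does not obviously telescope to zero.

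Before committing to any matching argument, I would test the smallest case $N=1$, $K=1$. There $P(D_q=0)=P(x_1\le q-1)=1$, whereas the displayed right-hand side equals $q^{-1}\binom{q}{0}=1/q$. So the summation plan above does not yield the formula with lower index $N-1$. If this check stands, the argument cannot be completed for the statement as worded. What it does establish is
\begin{align*}
P(D_{Kq}=0)=\frac{1}{q^N}\sum_{i=0}^{K-1}(-1)^i\binom{N}{i}\binom{(K-i)q+N-1}{N}.
\end{align*}
That is, the stated summand would need lower index $N$ in place of $N-1$; with index $N-1$ the identity fails already at $N=K=1$.
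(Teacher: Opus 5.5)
Your analysis is correct: the statement as printed is false, and your corrected formula with lower index $N$ is the right one. Your route is the paper's own: sum the point masses of \cref{prop:distribution_of_S_N} over $0\le s\le Kq-1$, exchange the sums so that $i$ runs over $0,\dots,K-1$ and $s$ over $iq\le s\le Kq-1$, shift by $iq$, and apply the hockey-stick identity. The paper's proof goes wrong at exactly the step you flagged. It quotes $\binom{r}{r}+\cdots+\binom{n}{r}=\binom{n+1}{r+1}$. With $r=N-1$ and $n=(K-i)q+N-2$ this gives $\binom{(K-i)q+N-1}{N}$, but the paper writes lower index $N-1$. So what both arguments actually establish is
$P(D_{Kq}=0)=q^{-N}\sum_{i=0}^{K-1}(-1)^i\binom{N}{i}\binom{(K-i)q+N-1}{N}$.

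Your counterexample is valid: at $N=K=1$ the true value is $1$ and the printed value is $1/q$. Likewise, $N=2$, $K=1$ gives $(q+1)/(2q)$ against the printed $(q+1)/q^2$.

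A structural check also separates the two versions. Take $K\ge N+1$. Then $S_N\le N(q-1)<Kq$, so $P(D_{Kq}=0)=1$. Because $\binom{N}{i}=0$ for $i>N$, the sum becomes a full alternating $N$-th difference $\sum_{i=0}^{N}(-1)^i\binom{N}{i}p(i)$, where $p$ is polynomial in $i$ on this range:
\begin{itemize}
\item With lower index $N-1$, $p$ has degree $N-1$, so the difference vanishes and the printed formula returns $0$.
\item With lower index $N$, $p$ has degree $N$ with leading coefficient $(-q)^N/N!$. The sum is then $(-1)^N N!\cdot(-q)^N/N!=q^N$, and the corrected formula returns $1$.
\end{itemize}
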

    \begin{proof}
        Since $D_{Kq}=0$ if and only if $0\leq S_N \leq Kq-1$, by \cref{prop:distribution_of_S_N} it is enough to compute 
        \begin{align}\label{eq:sum_of_distribution_of_S_N}
            \sum_{s=0}^{Kq-1} \frac{1}{q^N}\sum_{i=0}^{\left\lfloor \frac{s}{q} \right\rfloor}(-1)^i \binom{N}{i}\binom{s-qi+N-1}{N-1}.
        \end{align}

        We rewrite the above in terms of $i$ as follows:
        \begin{align*}
            \binom{N}{i}\sum_{s=iq}^{Kq-1} \binom{s-iq+N-1}{N-1}
            &= \binom{N}{i}\sum_{s=0}^{(K-i)q-1} \binom{s+N-1}{N-1}\\
            &= \binom{N}{i}\binom{(K-i)q+N-1}{N-1}\\
        \end{align*}
        Here, for the last equality, we use 
        \begin{align*}
            \binom{r}{r} + \binom{r+1}{r} + \binom{r+2}{r} + \cdots + \binom{n}{r} = \binom{n+1}{r+1},
        \end{align*}
        where $n\geq r\geq 0$.
    \end{proof}

\section{Optimal Hyperparameters Used in the Experiments}
\label{sec:hyperparameters}

\Cref{tab:optimal_hyperparameters} summarizes the specific hyperparameter pairs $(K, r)$ determined via the preliminary grid search and utilized in our experiments. As observed in the table, there is no obvious regularity or monotonic trend in the optimal values of $K$ and $r$ with respect to the sequence length $N$ or the modulus $q$. 
However, as demonstrated by the robustness analysis in \cref{ex:ex3}, our proposed method achieves consistently high match accuracy across a broad region of the hyperparameter space. Specifically, as long as $K$ is sufficiently large and $r$ is not excessively large, the model maintains stable performance. Therefore, the lack of a strict pattern in these selected optimal values does not pose a critical issue for the general applicability of our method. Nevertheless, further theoretical and empirical analyses are required as future work to systematically derive the optimal $K$ and $r$ for any given problem setting.

\begin{table}[htpb] %
    \centering
    \small
    \setlength{\tabcolsep}{4pt}
    
    \caption{Optimal hyperparameters $(K, r)$ used for each setting, determined via grid search on a $100\text{K}$ subset. The values are presented separately for \textbf{(a)} token embedding and \textbf{(b)} angular embedding. Hyphens (-) indicate combinations that were not evaluated.}
    \label{tab:optimal_hyperparameters}
    
    \begin{subtable}[t]{0.35\linewidth}
        \centering
        \caption{Token embedding}
        \label{tab:optimal_hyperparameters_token}
        \begin{tabular}{cccc}
            \toprule
            $N\backslash q$ & $31$ & $97$ & $257$ \\
            \midrule
            8   & (5, 0.2) & (4, 0.4) & (8, 0.4) \\
            16  & (5, 0.4) & (4, 0.4) & (4, 0.1) \\
            32  & (5, 0.4) & (8, 0.4) & (9, 0.1) \\
            \bottomrule
        \end{tabular}
    \end{subtable}%
    \hfill
    \begin{subtable}[t]{0.62\linewidth}
        \centering
        \caption{Angular embedding}
        \label{tab:optimal_hyperparameters_angular}
        \begin{tabular}{ccccccc}
            \toprule
            $N\backslash q$ & $97$ & $257$ & $433$ & $3329$ & $42899$ & $974269$ \\
            \midrule
            16  & (5, 0.4) & (6, 0.3) & (6, 0.3) & - & - & - \\
            32  & (4, 0.1) & (4, 0.2) & (6, 0.1) & (4, 0.4) & (4, 0.3) & (4, 0.2) \\
            64  & (4, 0.3) & (4, 0.3) & (4, 0.3) & (4, 0.3) & (4, 0.3) & (4, 0.2) \\
            128  & - & (8, 0.4) & - & (6, 0.4) & (8, 0.4) & (8, 0.3) \\
            \bottomrule
        \end{tabular}
    \end{subtable}
\end{table}

\section{Extended Heatmaps for Hyperparameter Sensitivity}
\label{sec:additional_heatmaps}

\begin{figure}[htbp]
    \centering
    
    \begin{subfigure}[b]{0.32\linewidth}
        \centering
        \HeatmapInclude[width=\linewidth, trim={0.5cm 0.5cm 0.5cm 0.5cm}, clip]{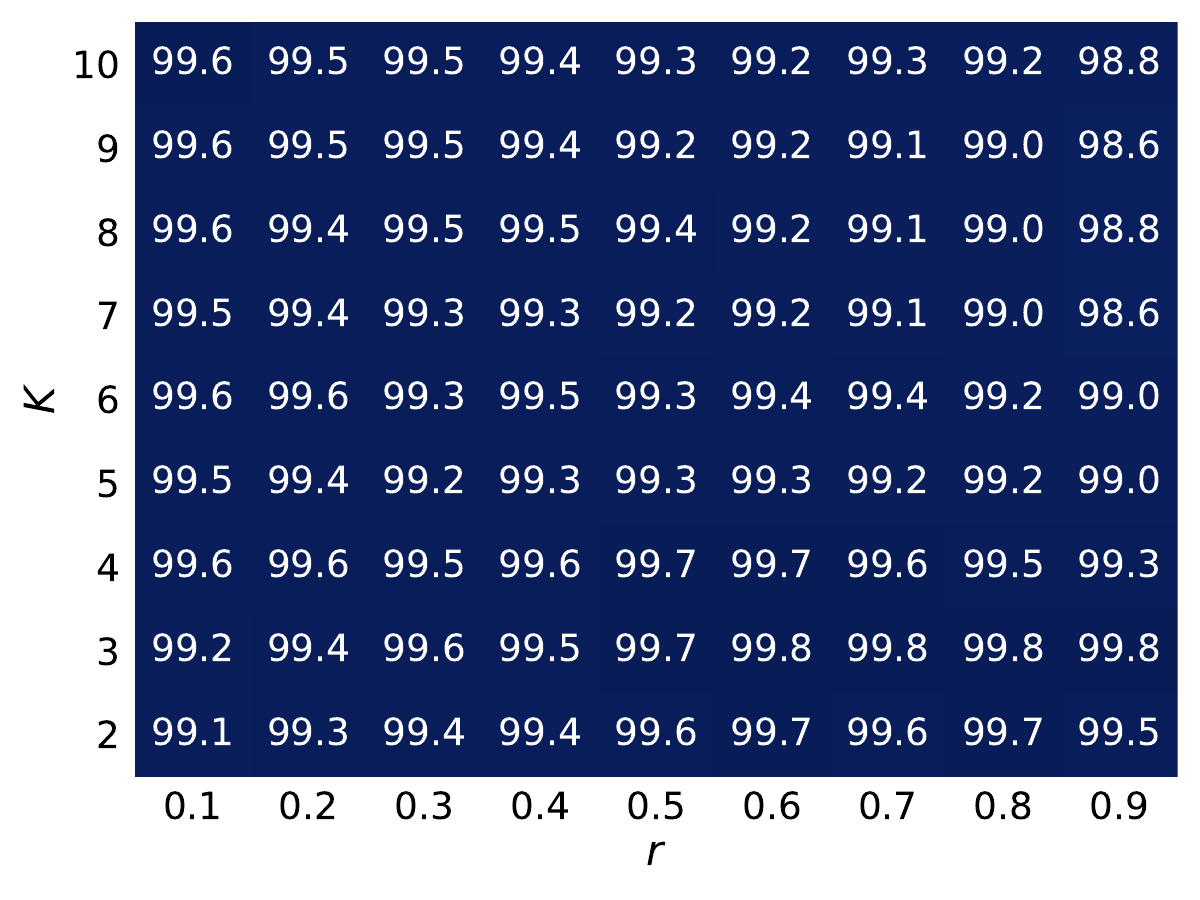}
        \caption{$N=8, q=31$}
    \end{subfigure}
    \hfill
    \begin{subfigure}[b]{0.32\linewidth}
        \centering
        \HeatmapInclude[width=\linewidth, trim={0.5cm 0.5cm 0.5cm 0.5cm}, clip]{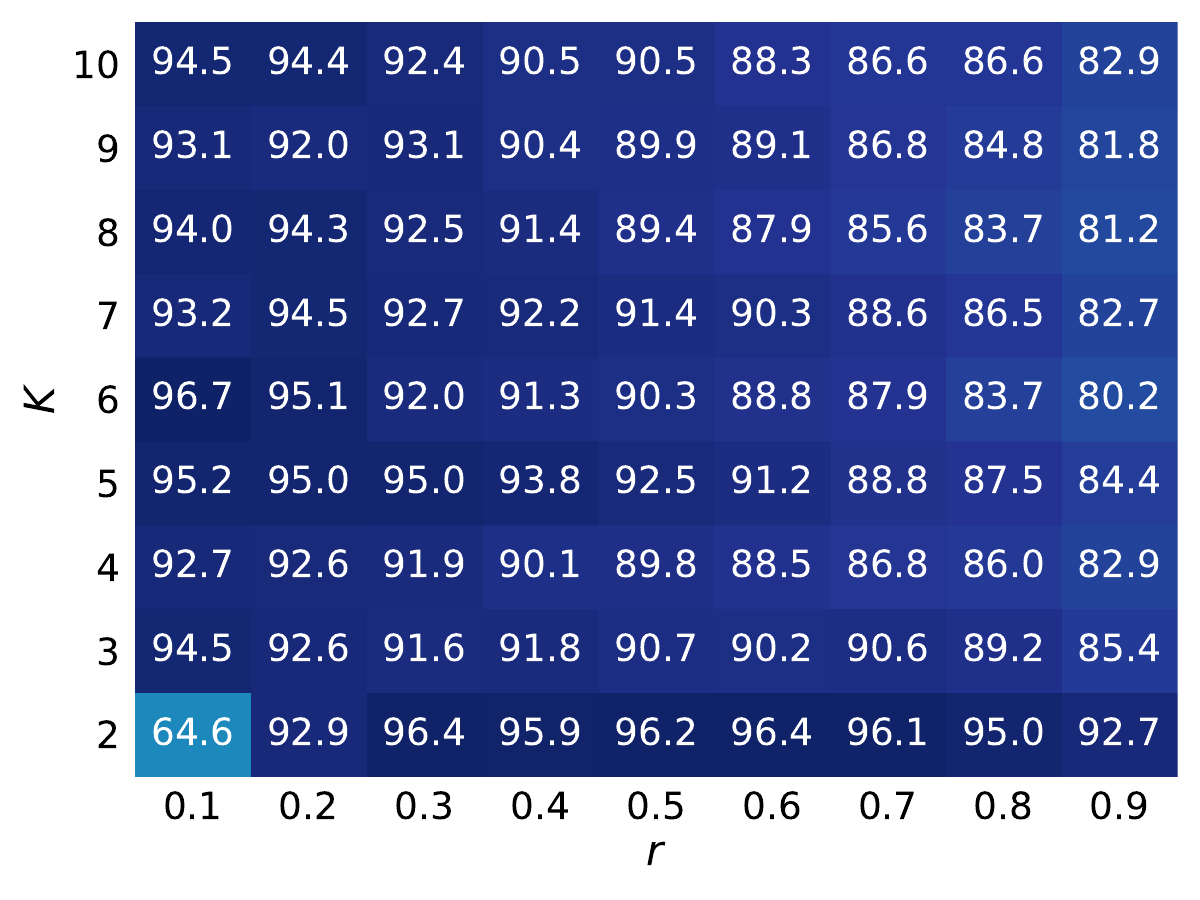}
        \caption{$N=8, q=97$}
    \end{subfigure}
    \hfill
    \begin{subfigure}[b]{0.32\linewidth}
        \centering
        \HeatmapInclude[width=\linewidth, trim={0.5cm 0.5cm 0.5cm 0.5cm}, clip]{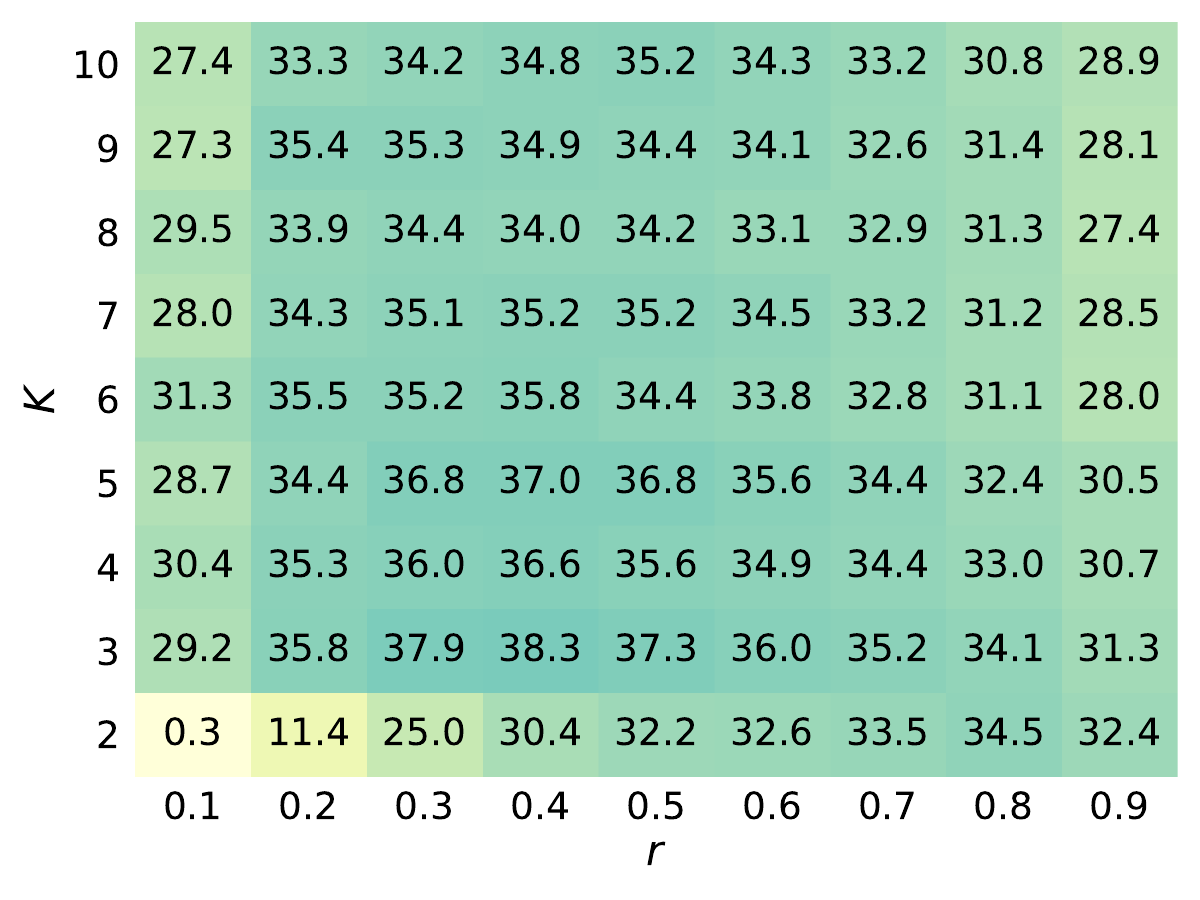}
        \caption{$N=8, q=257$}
    \end{subfigure}
    
    \vspace{1em} %
    
    \begin{subfigure}[b]{0.32\linewidth}
        \centering
        \HeatmapInclude[width=\linewidth, trim={0.5cm 0.5cm 0.5cm 0.5cm}, clip]{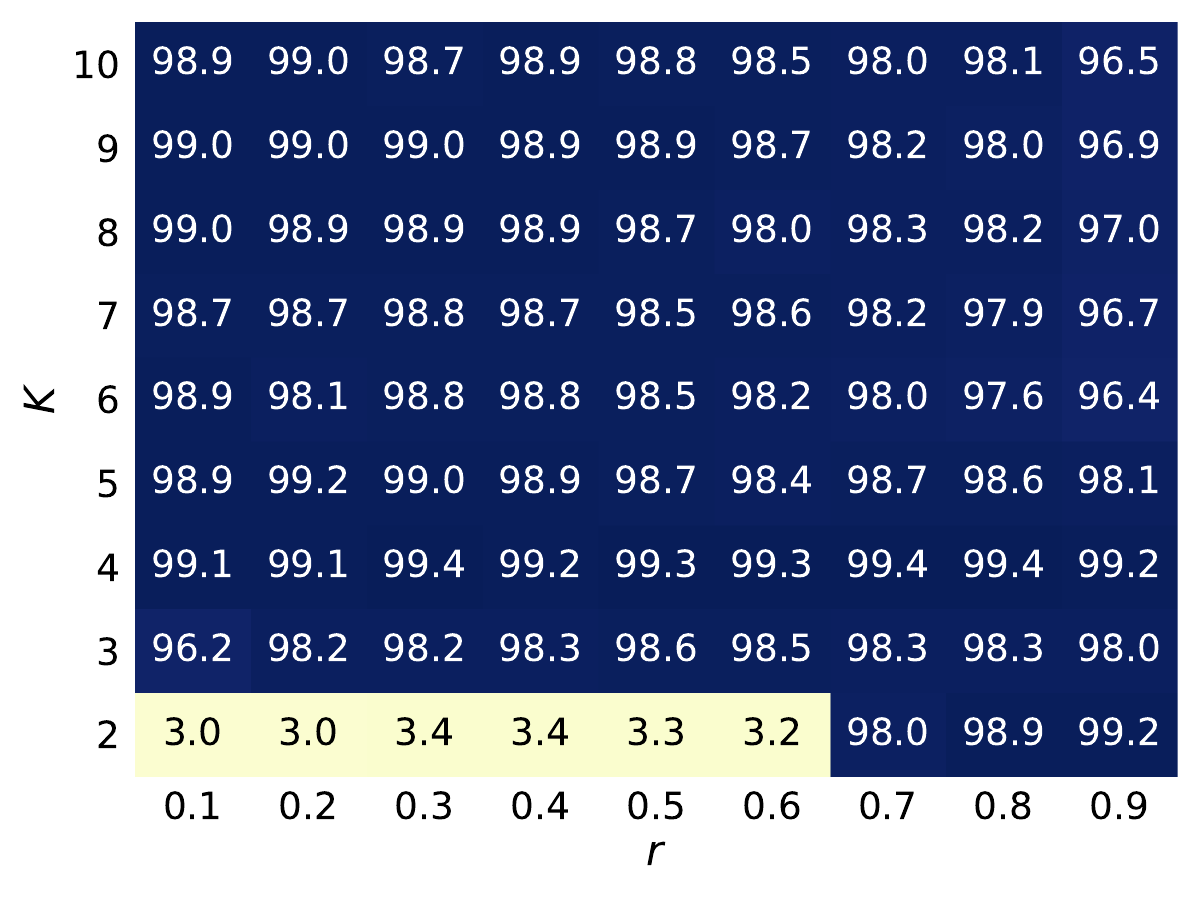}
        \caption{$N=16, q=31$}
    \end{subfigure}
    \hfill
    \begin{subfigure}[b]{0.32\linewidth}
        \centering
        \HeatmapInclude[width=\linewidth, trim={0.5cm 0.5cm 0.5cm 0.5cm}, clip]{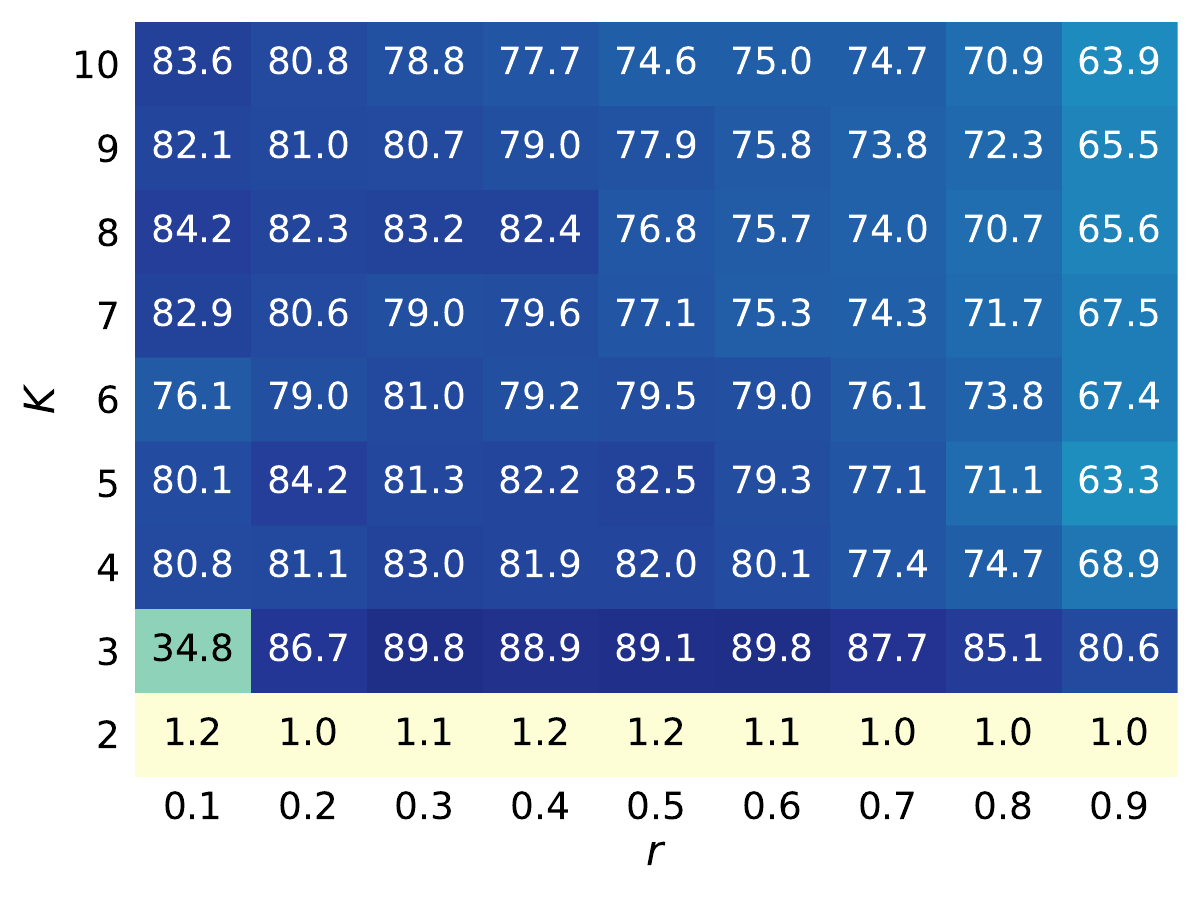}
        \caption{$N=16, q=97$}
    \end{subfigure}
    \hfill
    \begin{subfigure}[b]{0.32\linewidth}
        \centering
        \HeatmapInclude[width=\linewidth, trim={0.5cm 0.5cm 0.5cm 0.5cm}, clip]{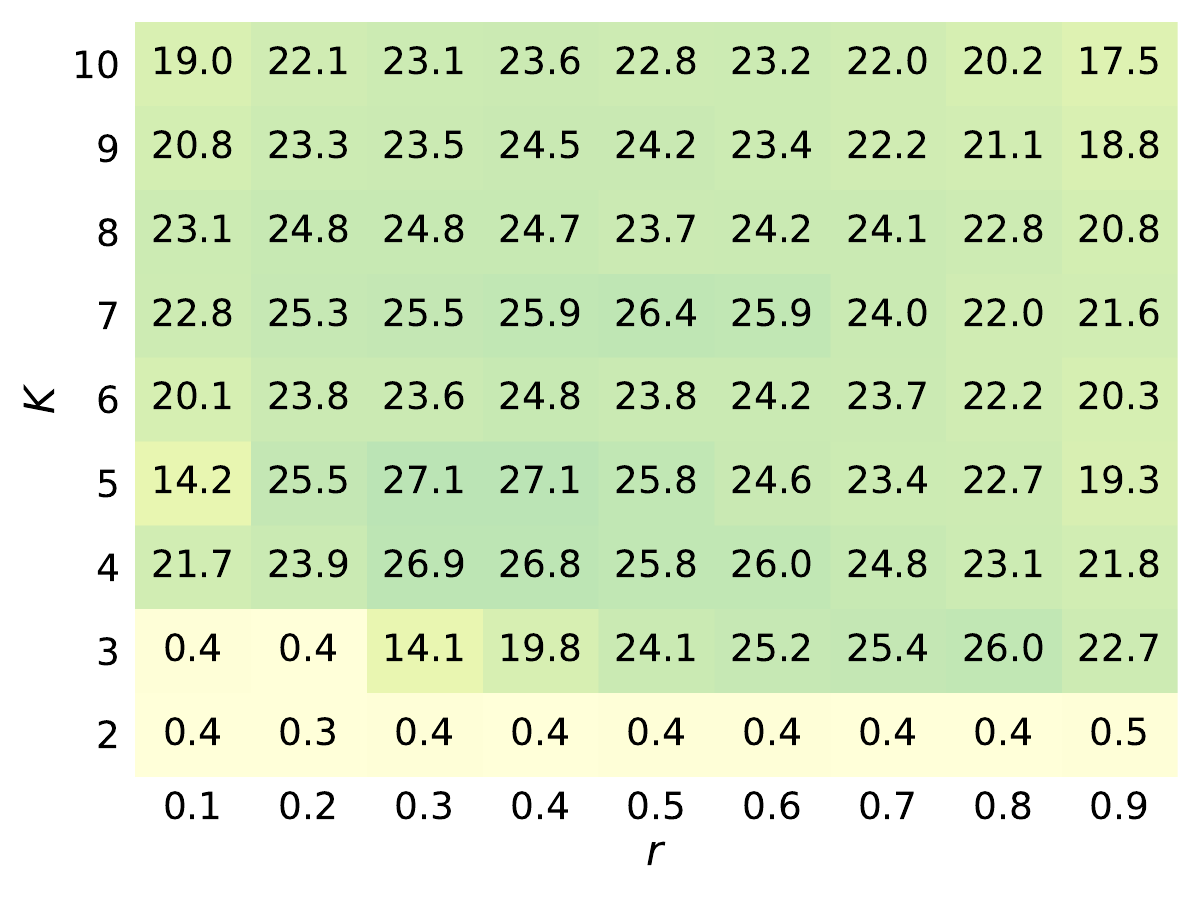}
        \caption{$N=16, q=257$}
    \end{subfigure}

    \vspace{1em} %
    
    \begin{subfigure}[b]{0.32\linewidth}
        \centering
        \HeatmapInclude[width=\linewidth, trim={0.5cm 0.5cm 0.5cm 0.5cm}, clip]{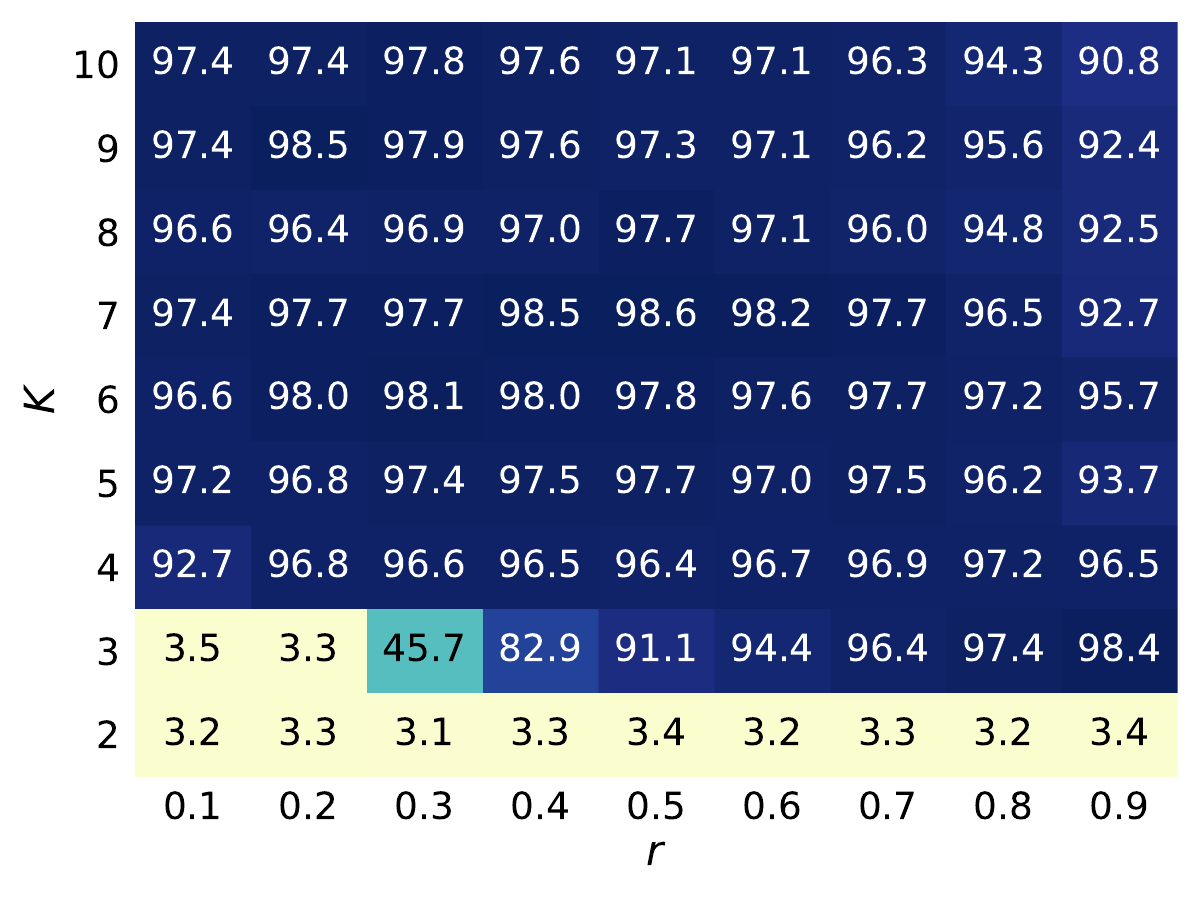}
        \caption{$N=32, q=31$}
    \end{subfigure}
    \hfill
    \begin{subfigure}[b]{0.32\linewidth}
        \centering
        \HeatmapInclude[width=\linewidth, trim={0.5cm 0.5cm 0.5cm 0.5cm}, clip]{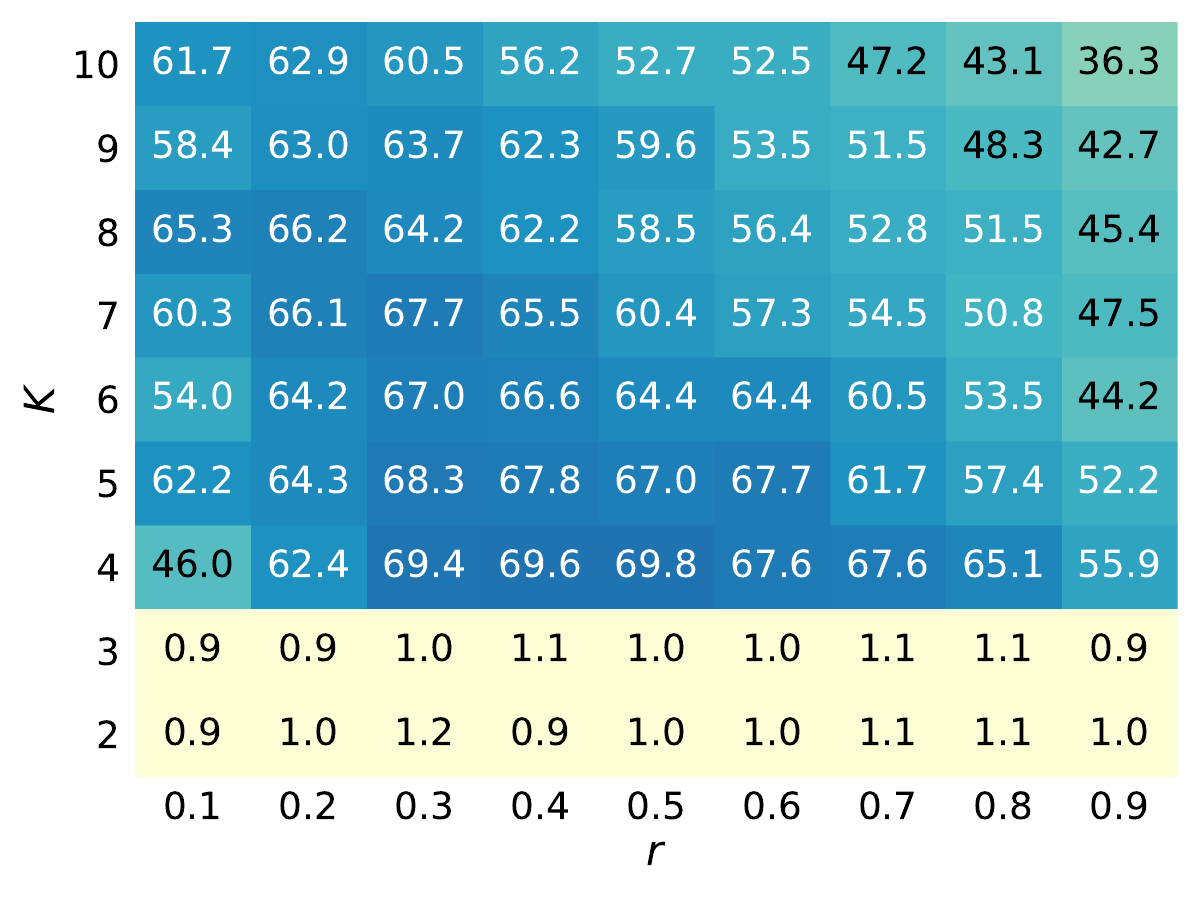}
        \caption{$N=32, q=97$}
    \end{subfigure}
    \hfill
    \begin{subfigure}[b]{0.32\linewidth}
        \centering
        \HeatmapInclude[width=\linewidth, trim={0.5cm 0.5cm 0.5cm 0.5cm}, clip]{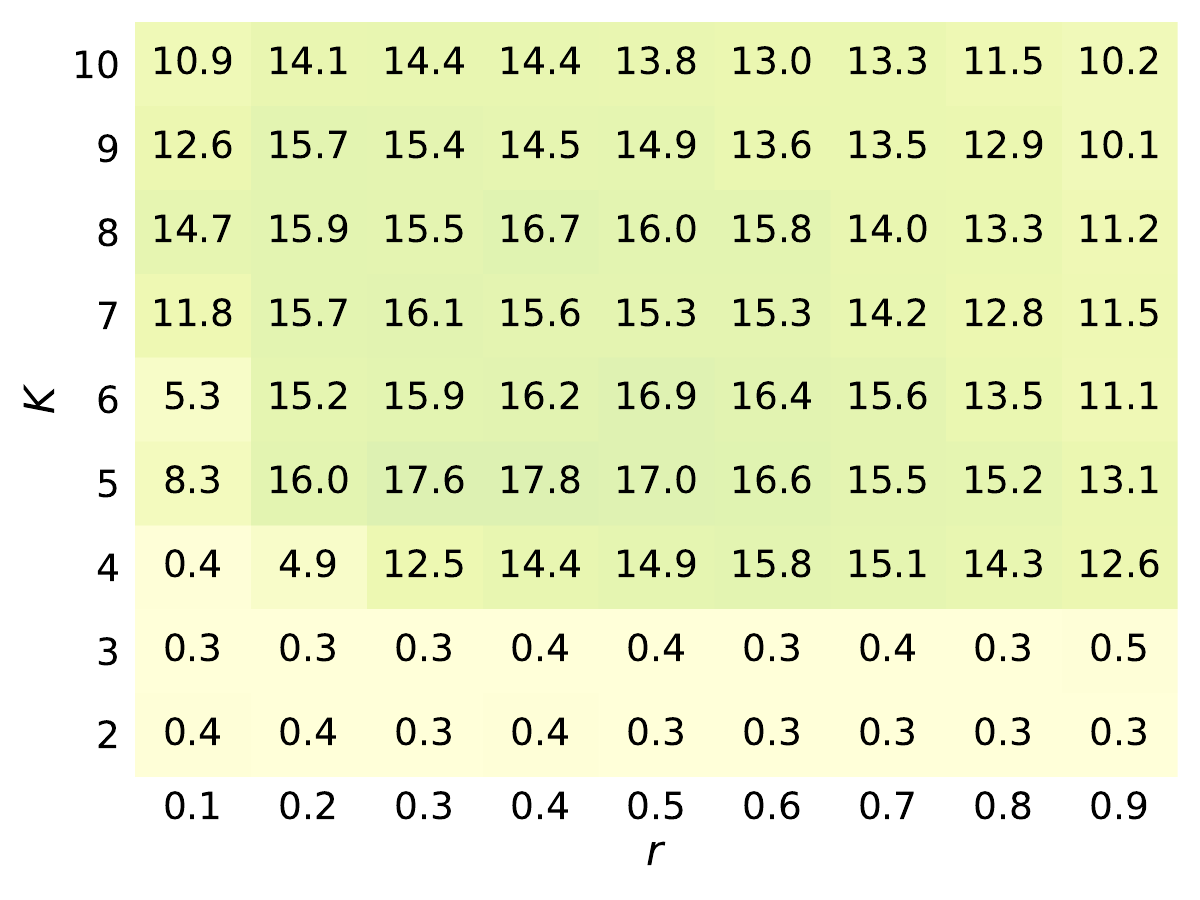}
        \caption{$N=32, q=257$}
    \end{subfigure}

    \vspace{1.5em} %
    
    \includegraphics[width=0.6\linewidth]{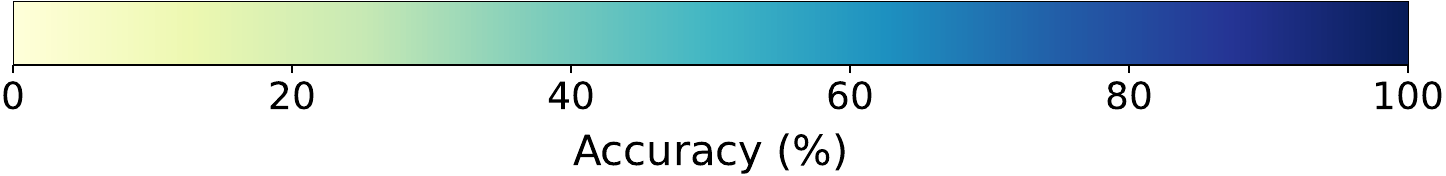} %
    
    \caption{Heatmaps of match accuracy across various combinations of $K$ and $r$ for \textbf{token embedding}. The color scale represents the match accuracy (\%), which is shared across all nine configurations.}
    \label{fig:all_heatmaps_token}
\end{figure}

\begin{figure}[htbp]
    \centering
    
    \begin{subfigure}[b]{0.32\linewidth}
        \centering
        \HeatmapInclude[width=\linewidth, trim={0.5cm 0.5cm 0.5cm 0.5cm}, clip]{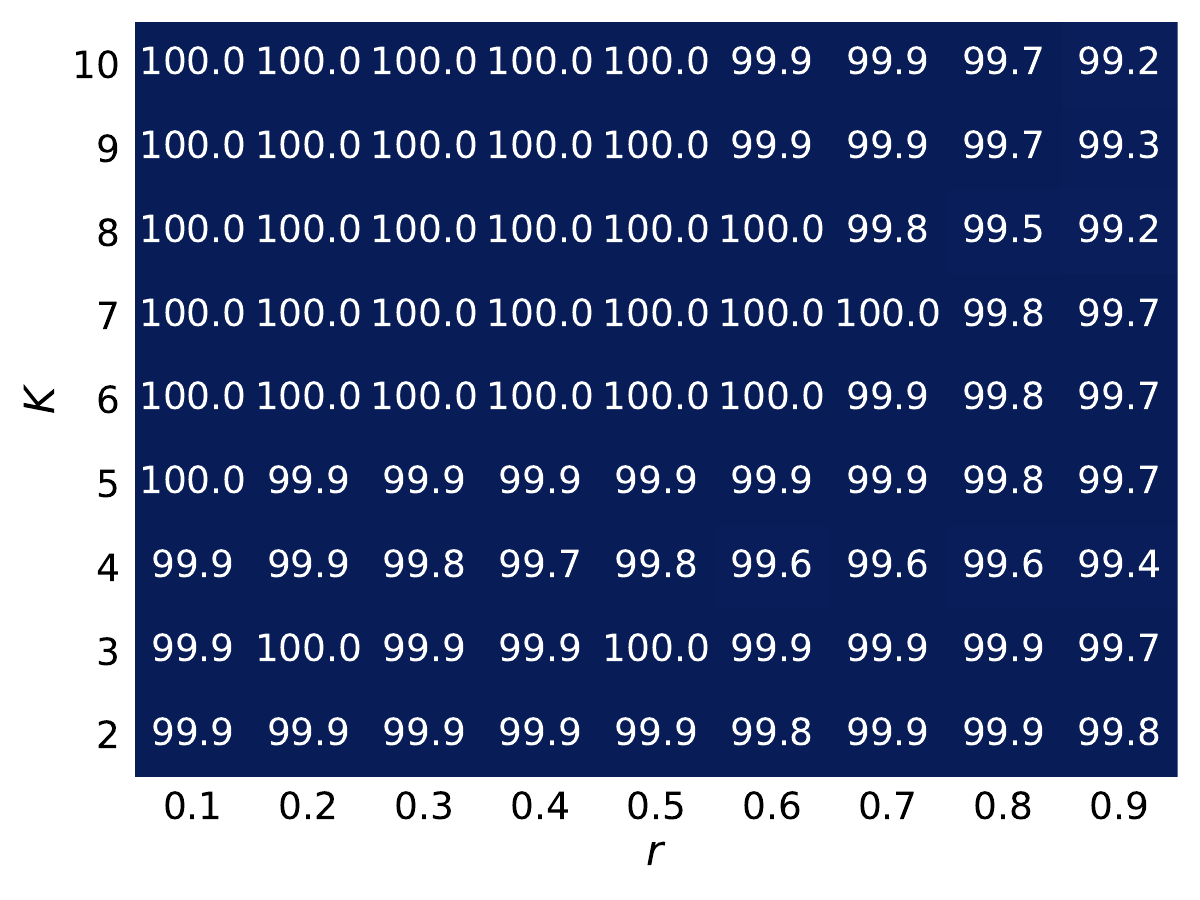}
        \caption{$N=16, q=97$}
    \end{subfigure}
    \hfill
    \begin{subfigure}[b]{0.32\linewidth}
        \centering
        \HeatmapInclude[width=\linewidth, trim={0.5cm 0.5cm 0.5cm 0.5cm}, clip]{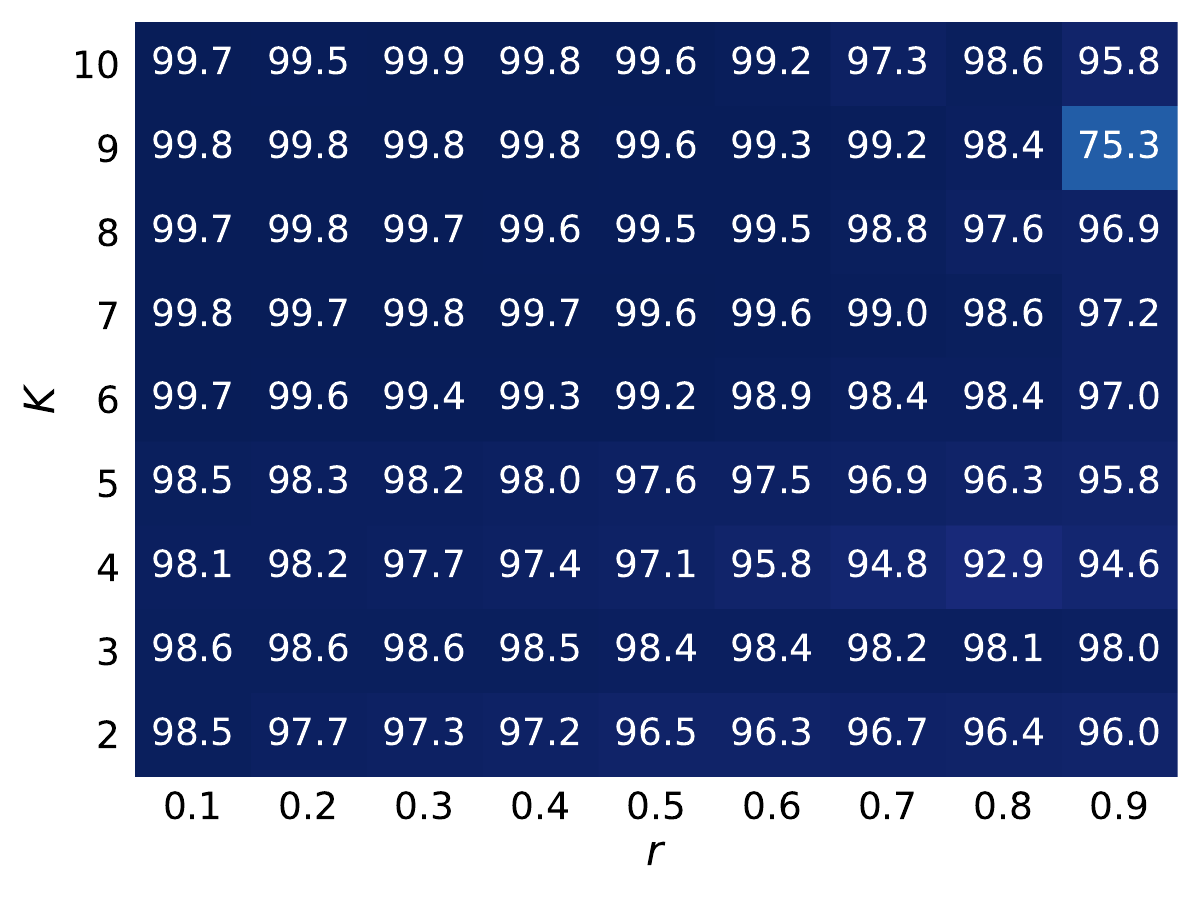}
        \caption{$N=16, q=257$}
    \end{subfigure}
    \hfill
    \begin{subfigure}[b]{0.32\linewidth}
        \centering
        \HeatmapInclude[width=\linewidth, trim={0.5cm 0.5cm 0.5cm 0.5cm}, clip]{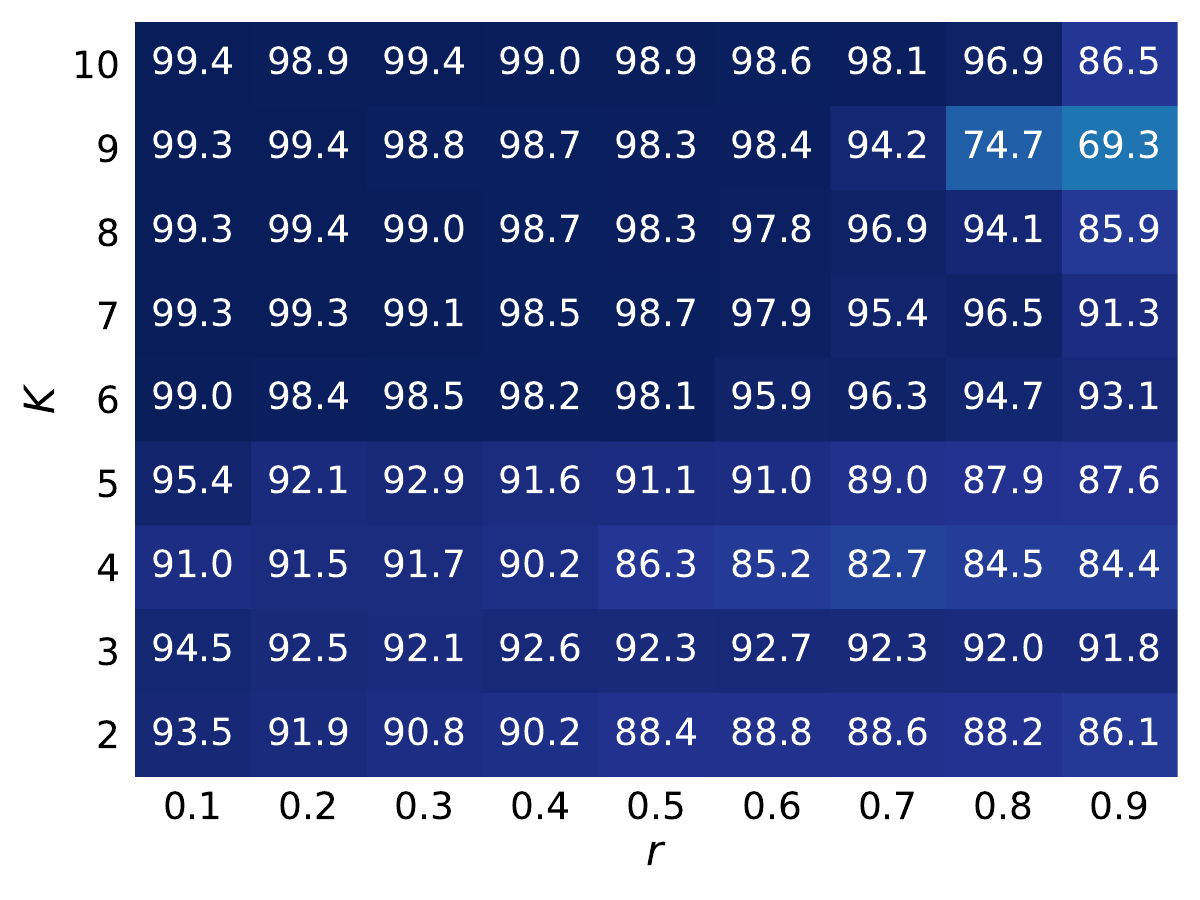}
        \caption{$N=16, q=433$}
    \end{subfigure}
    
    \vspace{1em}
    
    \begin{subfigure}[b]{0.32\linewidth}
        \centering
        \HeatmapInclude[width=\linewidth, trim={0.5cm 0.5cm 0.5cm 0.5cm}, clip]{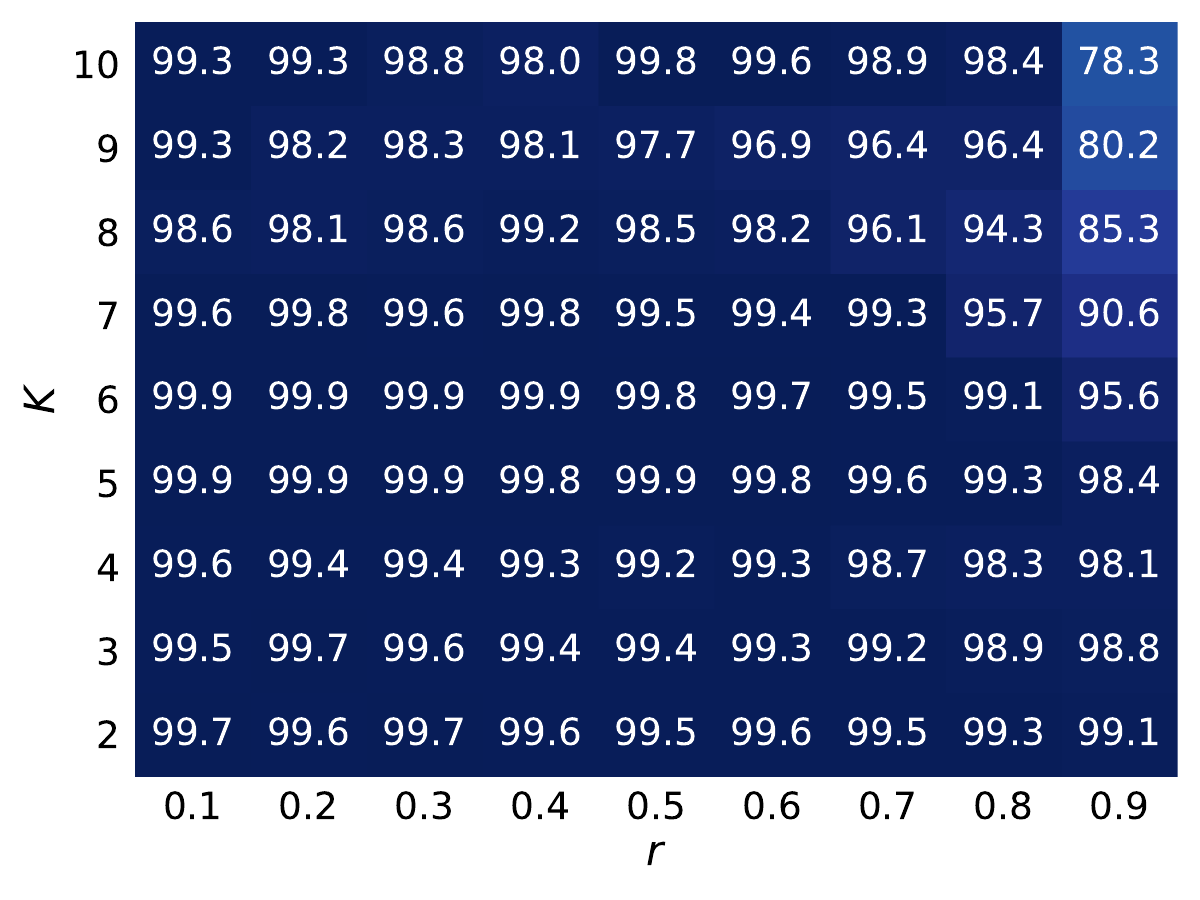}
        \caption{$N=32, q=97$}
    \end{subfigure}
    \hfill
    \begin{subfigure}[b]{0.32\linewidth}
        \centering
        \HeatmapInclude[width=\linewidth, trim={0.5cm 0.5cm 0.5cm 0.5cm}, clip]{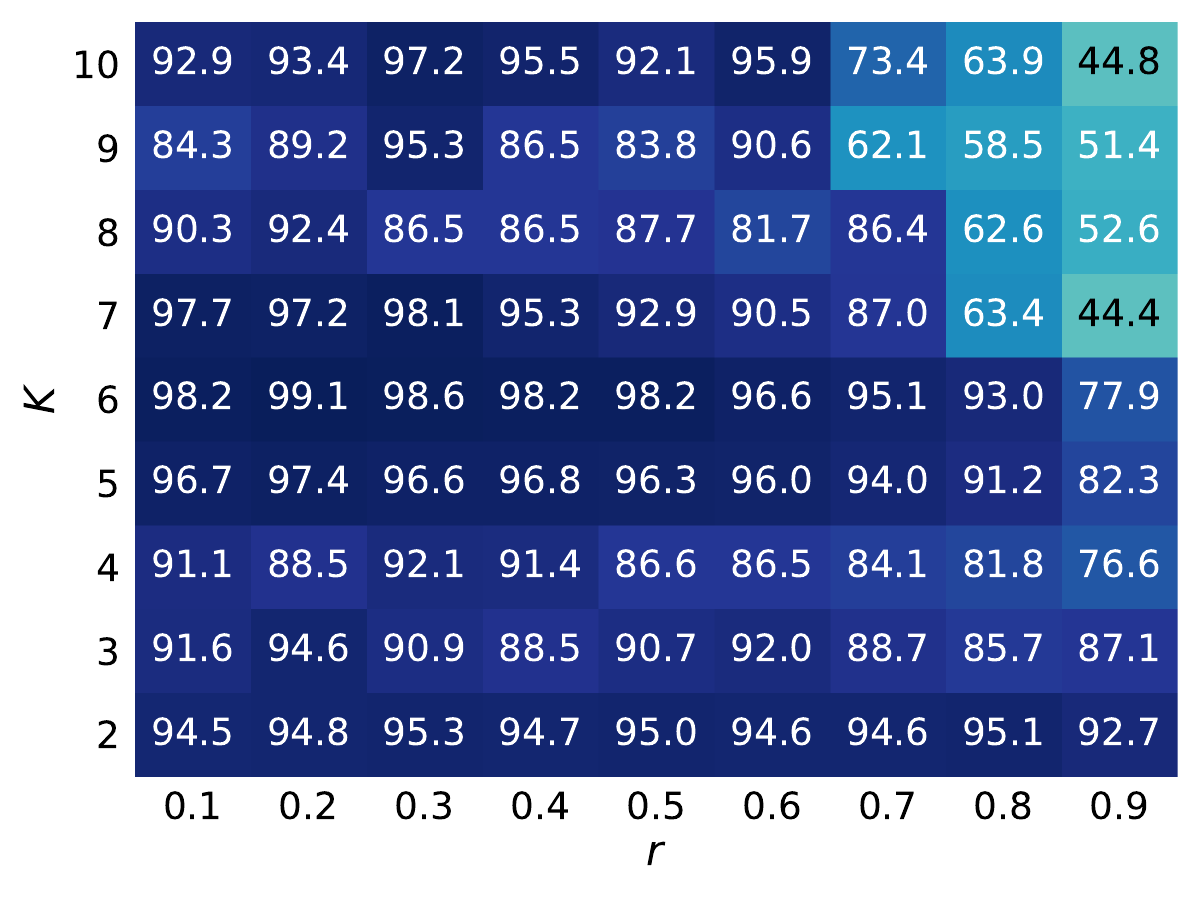}
        \caption{$N=32, q=257$}
    \end{subfigure}
    \hfill
    \begin{subfigure}[b]{0.32\linewidth}
        \centering
        \HeatmapInclude[width=\linewidth, trim={0.5cm 0.5cm 0.5cm 0.5cm}, clip]{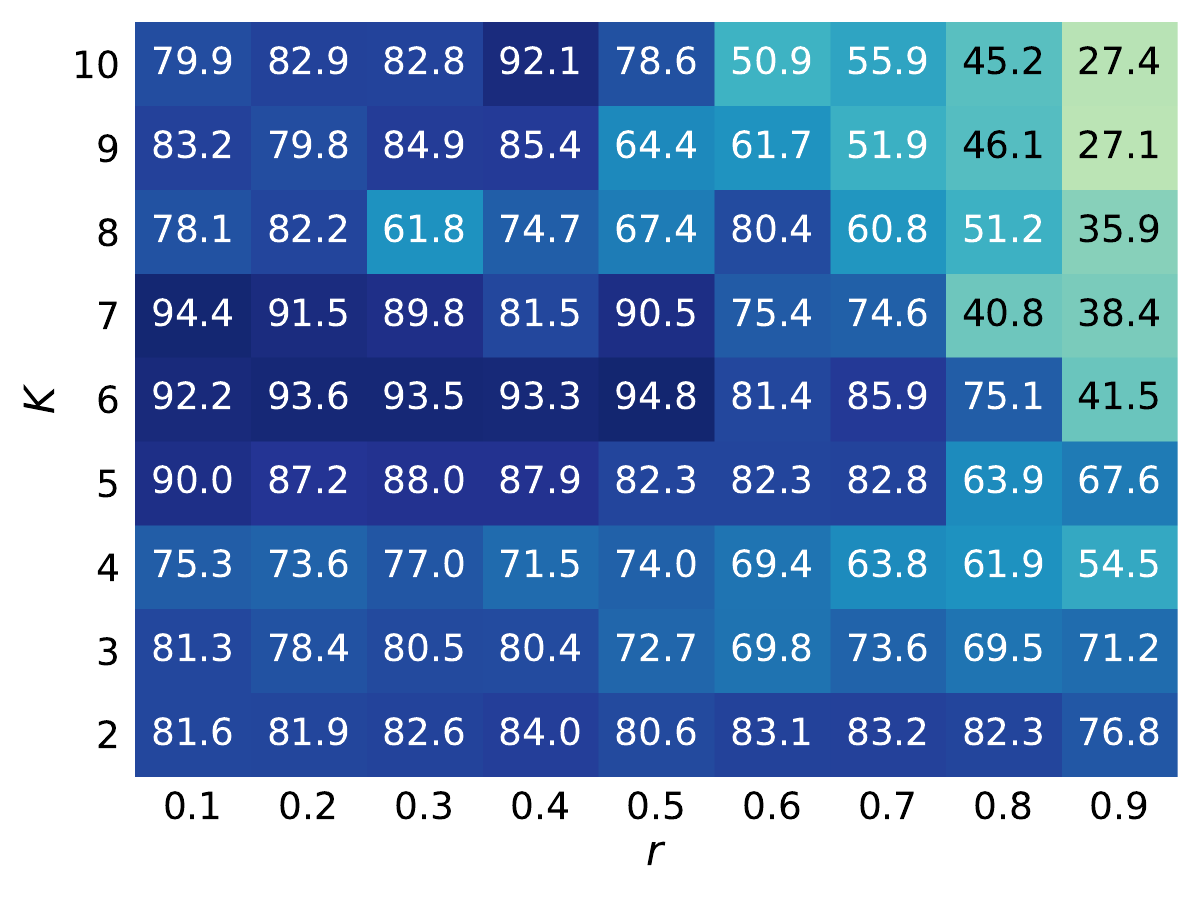}
        \caption{$N=32, q=433$}
    \end{subfigure}

    \vspace{1em}
    
    \begin{subfigure}[b]{0.32\linewidth}
        \centering
        \HeatmapInclude[width=\linewidth, trim={0.5cm 0.5cm 0.5cm 0.5cm}, clip]{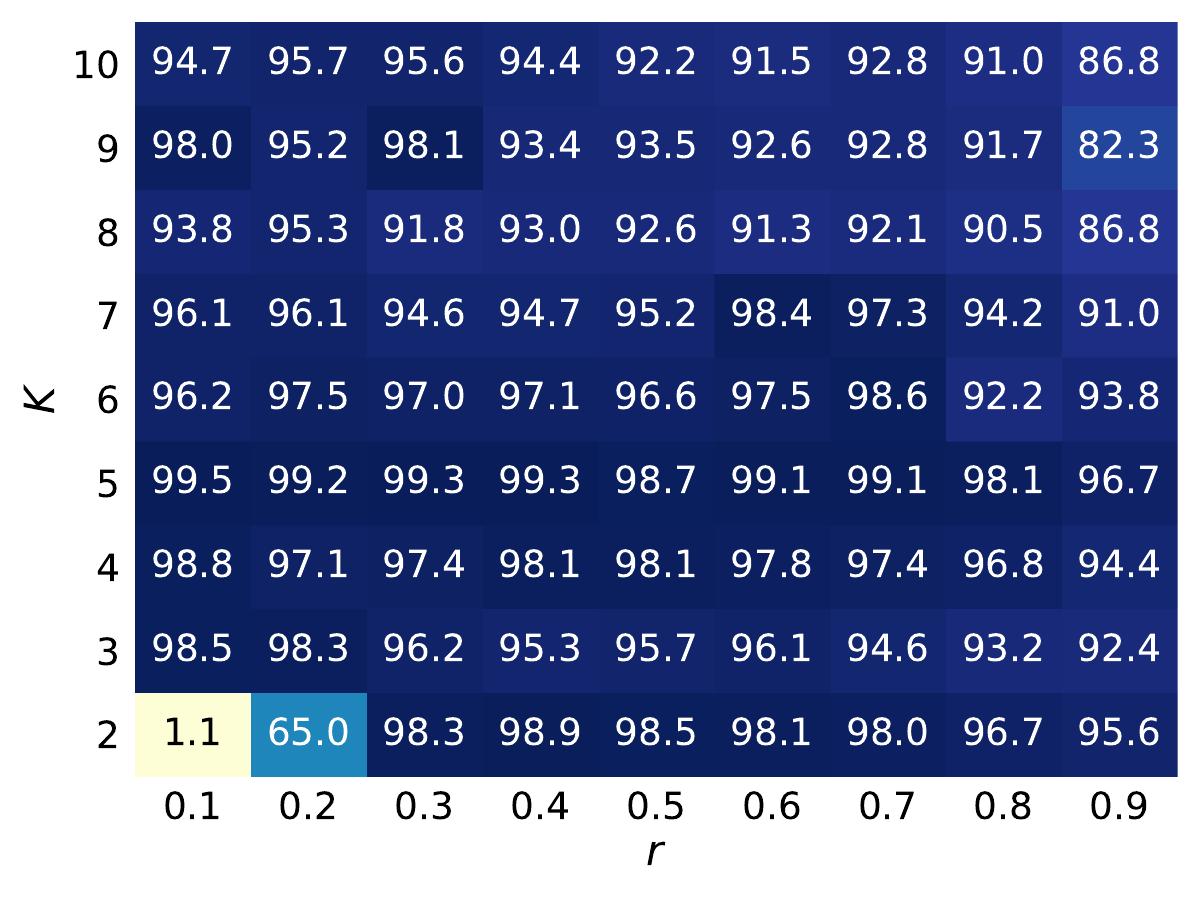}
        \caption{$N=64, q=97$}
    \end{subfigure}
    \hfill
    \begin{subfigure}[b]{0.32\linewidth}
        \centering
        \HeatmapInclude[width=\linewidth, trim={0.5cm 0.5cm 0.5cm 0.5cm}, clip]{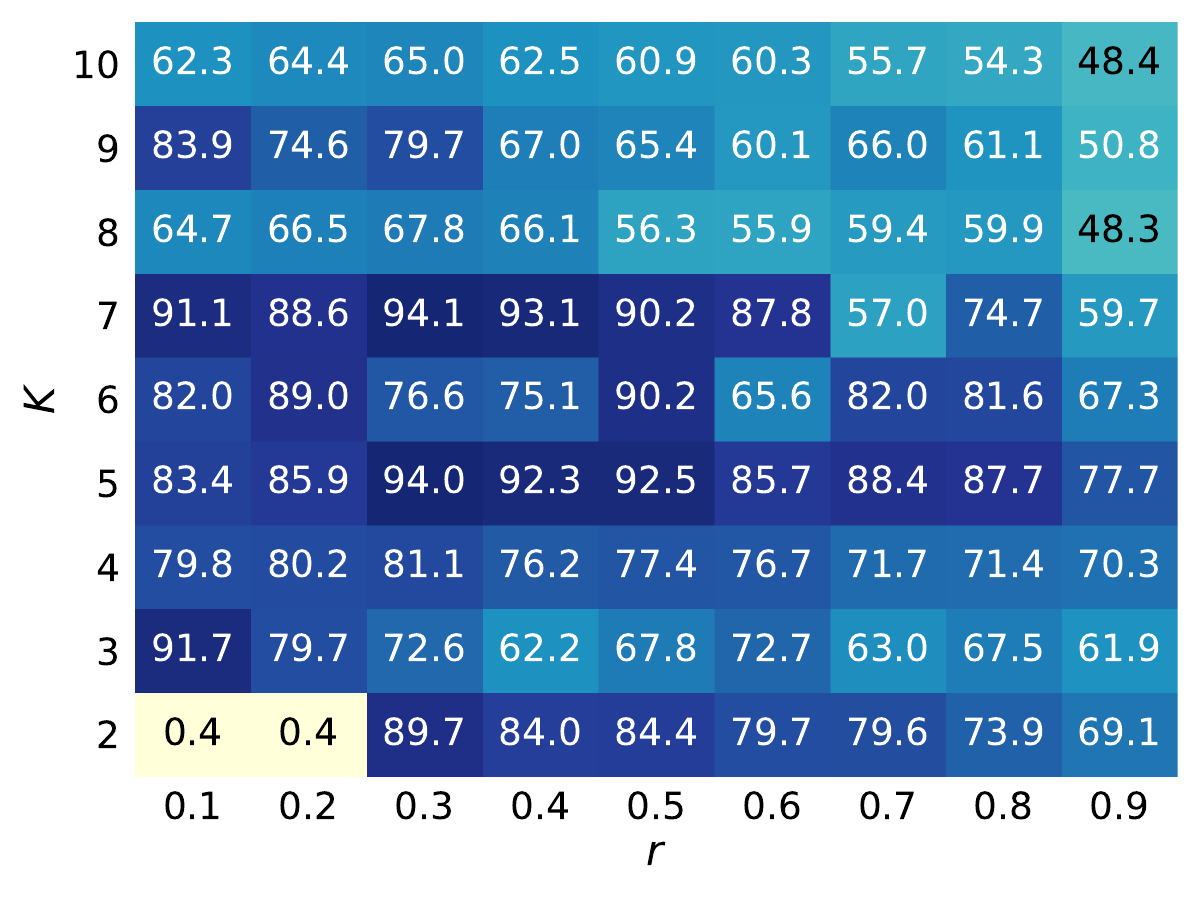}
        \caption{$N=64, q=257$}
    \end{subfigure}
    \hfill
    \begin{subfigure}[b]{0.32\linewidth}
        \centering
        \HeatmapInclude[width=\linewidth, trim={0.5cm 0.5cm 0.5cm 0.5cm}, clip]{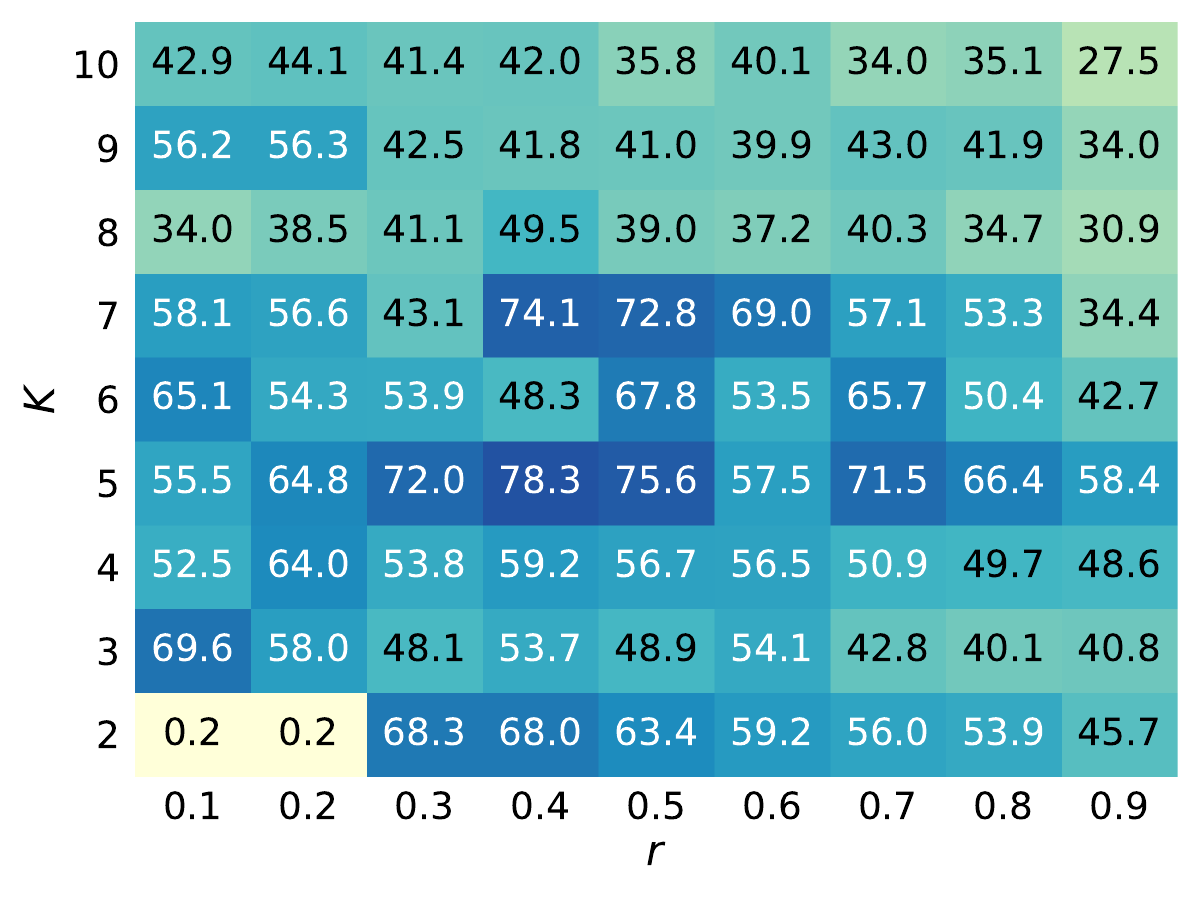}
        \caption{$N=64, q=433$}
    \end{subfigure}

    \vspace{1.5em}
    
    \includegraphics[width=0.6\linewidth]{figs/colorbar_horizontal.pdf} %
    
    \caption{Heatmaps of match accuracy across various combinations of $K$ and $r$ for \textbf{angular embedding}. The color scale represents the match accuracy (\%), which is shared across all nine configurations.}
    \label{fig:all_heatmaps_angular}
\end{figure}

In \cref{ex:ex3} of the main text, we demonstrated the robustness of our proposed method by showing its high average match accuracy across the practical grid search space ($K \in \{4, \dots, 9\}, r \in \{0.1, \dots, 0.4\}$). To provide a more comprehensive view of the hyperparameter landscape, this section presents the complete heatmaps expanded over a broader search space: $K \in \{2, \dots, 10\}$ and $r \in \{0.1, \dots, 0.9\}$. 

\Cref{fig:all_heatmaps_token,fig:all_heatmaps_angular} display the full set of grid search results for token embedding and angular embedding, respectively. As discussed in the main text, the proposed method consistently achieves high accuracy across a wide region. A significant decrease in match accuracy is primarily observed only under extreme conditions, such as when $K$ is too small (e.g., $K=2$, where modulus reduction is insufficient) or when $r$ is excessively large (where the auxiliary task dominates the primary task).

\ifPDFTeX
\end{CJK*}
\fi
\end{document}